\documentclass[conference]{IEEEtran}
\usepackage{amssymb}
\usepackage{graphicx} 
\usepackage{xcolor,colortbl}
\usepackage{epigraph}
\usepackage{mathtools}
\usepackage{amsmath}
\usepackage{cases} 
\usepackage{amsthm}
\usepackage[colorinlistoftodos]{todonotes}

\usepackage{times}
\usepackage{caption}
\usepackage{subfig}
\usepackage{booktabs}

\usepackage{multicol}
\usepackage[bookmarks=true]{hyperref}
\usepackage{cleveref}

\usepackage[linesnumbered,ruled,vlined]{algorithm2e}

\SetCommentSty{mycommfont}
\SetKwInput{KwInput}{Input}
\SetKwInput{KwOutput}{Output}
\newcommand{\spacestate}{\mathcal{S}}
\newcommand{\spacecontrol}{\mathcal{U}}
\newcommand{\diffusionTime}{r} 
\newcommand{\statetraj}{x_{0:T}}
\newcommand{\controltraj}{u_{0:T-1}}
\newcommand{\controltrajMean}{\bar{u}_{0:T-1}}
\newcommand{\controltrajTimeParticle}{u_{0:T}^{(r, i)}}

\newcommand{\phorizon}{T}
\newcommand{\nact}{n_a}

\newcommand{\measureP}{\mathbb{P}}

\newcommand{\measureU}{\mathbb{U}}

\newcommand{\measureUr}{{\mathbb{U}}^{(r)}}

\newcommand{\Brownian}{W}

\newcommand{\statecontrolpathcost}{J}

\newcommand{\laplaceSurrogateStateControlPathCostNext}{\xi(J\mid \measureU(\cdot\mid\controltrajMean+\deltaMean))}

\newcommand{\deltaMean}{\Delta(\controltrajMean)}

\newcommand{\samplesize}{N}

\newcommand{\cbomean}{\bar{u}_{0:T}}
\newcommand{\cbotemp}{\rho}

\newcommand{\softmaxappruJ}{\frac{\exp(-\rho\statecontrolpathcost (u_{0:T-1}^{(r,k)}\mid x_0))}{\sum_{j=1}^{\samplesize} \exp(-\rho\statecontrolpathcost (u_{0:T-1}^{(r,j)}\mid x_0))} }

\newtheorem{lemma}{Lemma}

\newtheorem{proposition}{Proposition}
\newtheorem{remark}{Remark}

\usepackage[most]{tcolorbox}
\tcbset{
  breakable,
  enhanced,
}
\newtcolorbox{questionbox}[1][]{
  colback=gray!10,
  colframe=black,
  fonttitle=\bfseries,
  title={Reviewer comments (#1)},
}

\newtcolorbox{answerbox}[1][]{
  colback=blue!5,
  colframe=blue!50!black,
  fonttitle=\bfseries,
  title={Response from authors (#1)},
}
\usepackage[numbers]{natbib}
\begin{document}

\title{Consensus-based Optimization (CBO):\\ Towards Global Optimality in Robotics}




%

\author{\authorblockN{Xudong Sun\authorrefmark{1}\authorrefmark{2}, Armand Jordana\authorrefmark{3}, Massimo Fornasier\authorrefmark{1}\authorrefmark{4}, Jalal Etesami\authorrefmark{1}\authorrefmark{2}, Majid Khadiv\authorrefmark{1}\authorrefmark{2}}
\authorblockA{\authorrefmark{1}School of Computation, Information and Technology, Technical University of Munich, Munich, Germany\\
Email: \{xudong.sun, massimo.fornasier, j.etesami, majid.khadiv\}@tum.de}
\authorblockA{\authorrefmark{2}Munich Institute of Robotics and Machine Intelligence (MIRMI), Munich, Germany}
\authorblockA{\authorrefmark{3}LAAS-CNRS, Université de Toulouse, CNRS, Toulouse, France\quad 
Email: armand.jordana@laas.fr}
\authorrefmark{4}Munich Center for Machine Learning (MCML), Munich, Germany}

\maketitle

\begin{abstract}
  Zero-order optimization has recently received significant attention for designing optimal trajectories and policies for robotic systems. However, most existing methods (e.g., MPPI, CEM, and CMA-ES) are local in nature, as they rely on gradient estimation. In this paper, we introduce consensus-based optimization (CBO) to robotics, which is guaranteed to converge to a global optimum under mild assumptions. We provide theoretical analysis and illustrative examples that give intuition into the fundamental differences between CBO and existing methods.
To demonstrate the scalability of CBO for robotics problems, we consider three challenging trajectory optimization scenarios: (1) a long-horizon problem for a simple system, (2) a dynamic balance problem for a highly underactuated system, and (3) a high-dimensional problem with only a terminal cost.
Our results show that CBO is able to achieve lower costs with respect to existing methods on all three challenging settings. This opens a new framework to study global trajectory optimization in robotics. 


\end{abstract}

\IEEEpeerreviewmaketitle

\section{Introduction}
State-of-the-art trajectory and policy optimization relies heavily on engineered heuristics, subtle design of cost functions, and careful weighting of different objectives~\cite{wensing2023optimization,ha2025learning}. One alternative to the cumbersome tuning of cost functions is to train policies directly retargeting human demonstrations to robots~\cite{allshire2025visual,yang2025omniretarget} or using large quantities of data collected via teleoperation~\cite{chi2025diffusion,intelligence2025pi_}. While these approaches have demonstrated impressive results in recent years, they can hardly guarantee any kind of optimality and might fail to obtain the full set of valid solutions to a given problem.
Hence, it is desirable to obtain favorable behaviors in robotics without reliance on human motion and other forms of high quality demonstration. Achieving this goal requires a capable optimization algorithm for trajectory optimization under high state-control dimensions and long horizons. 

Traditionally, trajectory optimization in robotics has relied on gradient-based methods~\cite{wensing2023optimization}. However, these techniques are well known to struggle with problems involving complex contact interactions, as (rigid) contact is inherently a hybrid phenomenon. Thanks to recent advances in parallelized simulation \cite{mujoco2023jax,makoviychuk2021isaac}, it is now possible to run massive parallel rollouts. This enables the simultaneous evaluation of many different control trajectories. As a result, zero-order (gradient-free) optimization has become increasingly popular for trajectory and policy optimization in robotics~\cite{jordana2025introduction,pan2025sampling}. By relying solely on function evaluations, these techniques avoid the challenging development of a differentiable simulator and are naturally contact-implicit.

Popular zero-order optimization techniques used for trajectory optimization in robotics are model predictive path integral (MPPI)~\cite{williams2016aggressive}, cross entropy method (CEM)~\cite{rubinstein2004cross}, and covariance matrix adaptation evolution strategy (CMA-ES)~\cite{hansen2001completely}. These methods have shown promise in solving complex robotic problems in real-time \cite{xue2025full,keshavarz2025control,pezzato2025sampling,crestaz2025td}.
The common ground among these techniques is that they iteratively perform a random search by generating samples and updating an estimate of the optimal solution based on the performance of each sample.
%
However, the sampling distribution is local, typically a multivariate Gaussian centered at the current estimate. The update step then approximates a gradient descent step~\cite{ollivier2017information}; this leads to local optimality.

In this paper, we explore the use of consensus-based optimization (CBO)~\cite{pinnau2017consensus} via particle dynamics to achieve global optimality in trajectory optimization for robotics. Instead of sampling locally, CBO uses a population of particles to explore globally. Each particle follows a diffusion dynamic that drives it towards a consensus point defined by the weighted average of the population, where the weights are determined by the trajectory costs. This approach allows for a more flexible exploration of the solution space and has been shown to converge to global optima under mild assumptions~\cite{carrillo2018analytical,fornasier2024consensus}.

Our major contributions in this work are:
\begin{itemize}
  \item We reinterpret zero-order optimization methods, widely used in robotics, under a general global optimization framework via surrogate objective minimization, which naturally highlights their weaknesses in achieving global optimality. We offer an intuitive explanation of how CBO works, how it addresses the weaknesses of other zero-order optimization methods, draw connections to current methods, and reinterpret other methods through the lens of the diffusion process dynamics. 
  \item We demonstrate that the CBO algorithm can identify meaningful global solutions in robotic trajectory optimization, even for high-dimensional humanoid models and problems with long time horizons. In particular, we illustrate how techniques based on local sampling can get stuck in local minima, regardless of the number of particles used, while CBO can find a global solution as the number of particles increases.
\end{itemize}

It is important to emphasize that CBO has been successfully tested on many global optimization benchmarks~\cite{JYZ} and applications \cite{fornasier2026}. This paper is the first to demonstrate its applicability to robotics problems and its superior performance compared to other zero-order optimization methods used in robotics.
It is also important to note that particle-based evolutionary strategies have a long history in robotics for performing global optimization~\cite{davidor1991genetic}. Rather than proposing a new optimization paradigm, here we introduce a unifying framework under which these meta-heuristic methods can be formally analyzed and studied within a rigorous mathematical setting.

The remainder of the paper is organized as follows. \Cref{sec:preliminaries} presents a unified framework for widely used zero-order optimization methods in robotics and highlights their inherently local nature. \Cref{sec:cbo} introduces the CBO formulation and provides intuition for its global behavior. The experimental results are presented in \Cref{sec:experiment}. Finally, \Cref{sec:conclusion} concludes the paper and summarizes our findings.

\section{Preliminaries}\label{sec:preliminaries}
\subsection{Notations}
In this section, we summarize the notions used throughout the paper to improve the readability:
  \begin{itemize}
    \item $r$: optimization iteration index
    \item $x_t\in\spacestate \subset \mathbb{R}^{n_s}$: state of system at time $t$ with state space $\spacestate$ of dimension $n_s$. 
    \item $u_{t}^{(r)}=(u_{t,1}^{(r)},\ldots,u_{t,n_a}^{(r)})\in \spacecontrol \subset \mathbb{R}^{\nact}$: control vector at time $t$ with $\nact$ number of actuations, at iteration $r$, where $\spacecontrol\subset \mathbb{R}^{\nact}$ is the control signal space.
    \item $\controltraj\in \prod_1^T \spacecontrol \subset \mathbb{R}^{\phorizon \times \nact }$: control signal (decision vector) corresponding to planning horizon $\phorizon$. 
    \item $\statetraj$: state trajectory  driven via control trajectory $\controltraj$ through system dynamic equation starting from state $x_0$. 
    \item Particle $i$: We use $\controltraj^{(r,i)}$ to denote the $i$th control signal at iteration $r$, which we call the $i$th particle. 
     \item $\measureUr(\controltraj)$: measure (a.k.a.~distribution) of control signal $\controltraj$ at iteration $r$.
    \item $\controltrajMean$: the location parameter of a distribution, i.e.~mean for parametric distribution or center (e.g.~target consensus point) of non-parametric distribution.

    \item $\cbotemp > 0$: temperature parameter controlling the selectiveness of the softmax weighting scheme.
  \end{itemize}



\subsection{The big picture of global optimization}\label{sec:global_opt}

Trajectory optimization finds a control sequence $\controltraj$ that minimizes a cost function $J$ given an initial state $x_0$: 
\begin{align}\label{eq:to_problem}
\min_{\controltraj} & \quad
    \statecontrolpathcost(\controltraj\mid x_0), 
\end{align}
where the cost is defined to be stagewise while satisfying the system's dynamics  $x_{t+1} = dyn(x_t, u_t)$:
\begin{align}
  \statecontrolpathcost(\controltraj\mid x_0) &=  \sum_{t=0}^{T-1} \ell_t(x_t, u_t) + \ell_T(x_T)\\
\text{s.t.} \quad &x_0=x_{init},\;\; x_{t+1} = dyn(x_t, u_t),
\end{align}
where $\{\ell_t\}$ and $\ell_T(x_T)$ denote the running and terminal costs, respectively. 

Achieving global optimality in trajectory optimization requires overcoming the limitations of traditional gradient-based optimization techniques that often get stuck in local minima of the objective function $J(\controltraj|x_0)$~\cite{wensing2023optimization}. One approach to mitigating this issue is to smooth the objective landscape by integrating the cost over a neighborhood of the current guess of the solution \cite{labbe2025aigo}. 

Formally, this smoothing can be defined given a probability distribution, $\measureU$, over the controls. More precisely, the smoothing surrogate of the objective function can be written as:
\begin{equation}
\xi(J\mid \measureU ) = \int J(\controltraj|x_0)d\measureU(\controltraj).\label{eq:surrogate}
\end{equation}
\begin{remark}
This smoothing can be interpreted as applying a low-pass filter to the cost landscape with a given bandwidth. 
A heavily smoothing filter may remove many local optima around the current solution and thus enable better solutions. However, it may also shift the global optimum of the surrogate landscape away from that of the original objective. Consequently, it is natural to begin the optimization with a strongly smoothing filter and gradually reduce its magnitude.
\end{remark}

%
Now, one may search for the probability distribution $\measureU$ that minimizes $\xi(J\mid \measureU)$. This naturally leads to an iterative optimization procedure in the spirit of gradient descent.
\begin{align}\label{eq:distribution_update_rule}
\measureU^{(r+1)}&= \measureU^{(r)} -\alpha\Delta \xi(J\mid \measureU^{(r)}),
\end{align}
where we use $\Delta$ to denote the variation operation to the distribution.
%

\begin{remark}
Typically, the distribution of controls $\measureU$ is chosen to be a multivariate Gaussian distribution. This is often referred to as randomized smoothing~\cite{Nesterov_Spokoiny_2017}. 
In this setting, we can write  $\measureU (\cdot \mid \controltrajMean)= \mathcal{N}(\controltrajMean, \Sigma)$, where $\controltrajMean$ is the mean of the gaussian distribution and $\Sigma$ is a fixed covariance matrix. In this case, the iterative procedure can be expressed as an iteration over the mean of the trajectory:
\begin{align}\label{eq:aigo_lebesgue}
\controltrajMean^{(r+1)}= \controltrajMean^{(r)}  -\alpha \nabla \xi(J\mid \measureU (\cdot \mid \controltrajMean^{(r)}))   
\end{align}
Note that the gradient is taken with respect to $\controltrajMean$.
The control mean can then be viewed as an estimate for the solution of the original minimization problem defined in~\eqref{eq:to_problem}.
\end{remark}


\begin{remark}
To enable global optimization, the following requirements need to be satisfied for representing $\measureU$:
\begin{itemize}
  \item For high-dimensional decision variables, the probability distribution $\measureU$ should be able to concentrate on the ``important'' regions of the solution space with favorable objective values and facilitate finite sample size approximation.
  \item The probability distribution $\measureU$ should be able to adapt and shrink its support to enable a refined search for the global optima.
\end{itemize}       
\end{remark}


\subsection{Zero-order methods used in robotics}\label{subsec:gradient_interp_sampling}
In this section, we cast the popular zero-order optimization methods used for trajectory optimization in robotics within the framework of~\eqref{eq:distribution_update_rule}. This perspective provides insight into their limitations and helps explain why our proposed method addresses them. We begin with the path integral framework.

\subsubsection{Path integral methods}
Path integral (PI) methods were originally introduced in~\cite{theodorou2010generalized} within the context of reinforcement learning. The method was derived from an information-theoretic perspective based on the Feynman-Kac lemma. Subsequently, a receding-horizon control variant, known as MPPI, was proposed in~\cite{williams2017model}.

The original derivation of PI updates happen in the state space; here, we are mostly interested in the control space solution directly. One can show that the control update can be computed as an expectation over sampled control trajectories weighted by their costs.
Given a current guess of the solution $\controltrajMean^{(r)}$, the algorithm samples $N$ random controls 
\begin{align}\label{eq:sampling_pro}
u_{0:T-1}^{(r,i)} \sim \mathcal{N}(\controltrajMean^{(r)}, \Sigma), 
\end{align}
where $i$ indexes the $N$ independent samples, and $\Sigma$ is a fixed covariance matrix.
These random control inputs are then used to compute the next guess according to the following  rule:
\begin{align}\label{eq:pi_update}
  \controltrajMean^{(r+1)}&=\sum_{i=1}^{\samplesize} w^{(r,i)} u_{0:T-1}^{(r,i)},\\
      \mbox{where} \quad   w^{(r,k)}    &= \softmaxappruJ , \label{eq:PI_weight}
\end{align}
where 
$w^{(r,k)}$ is referred to as the normalized weight computed from the cost of the $i^{th}$  trajectory.

The path integral update can be interpreted as a form of Gaussian smoothing over the objective function $J$ \cite{jordana2025introduction}.
To be more precise, one can define a surrogate of the form:
\begin{equation}
\xi(J\mid \measureU ) = -\frac{1}{\rho}\log \left(\int e^{-\rho J(\controltraj|x_0)} d\measureU(\controltraj)\right) .\label{eq:laplace_surrogate}
\end{equation}
Then, if we consider $\measureU (\cdot \mid \controltrajMean)\sim \mathcal{N}(\controltrajMean, \Sigma)$, the path integral updates in \eqref{eq:pi_update} can be recovered by applying \eqref{eq:distribution_update_rule}, with the gradient taken with respect to the mean $\controltrajMean$. The following proposition formalizes this result.




\begin{proposition}\label{prop:fisher}
The update for the mean of distribution $\controltrajMean$ under the PI framework in~\eqref{eq:pi_update} can be written as 
\begin{equation}
\controltrajMean^{(r+1)}= \controltrajMean^{(r)}-
\gamma 
\deltaMean,\label{eq:grad_fisher_kl_laplace}
\end{equation}
where $\deltaMean$ is the solution to the following constrained optimization problem with $\gamma$ being the Lagrange multiplier.
\begin{align}
&\arg\min_{\deltaMean} \laplaceSurrogateStateControlPathCostNext 
\label{eq:obj_natural_gradient_laplace_surrogate_plus_innerproduct_mppi}
\\
&s.t.~KL\Bigl(\measureU(u\mid \controltrajMean+\deltaMean) |\measureU(u\mid\controltrajMean)\Bigr) =\beta,\label{eq:constraint_natural_gradient_kl_sampling_distribution}
\end{align}
where $\beta$ defines the Kullback–Leibler (KL) divergence metric between the current distribution and new distribution.
\end{proposition}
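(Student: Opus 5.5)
The plan is to solve the constrained problem \eqref{eq:obj_natural_gradient_laplace_surrogate_plus_innerproduct_mppi}--\eqref{eq:constraint_natural_gradient_kl_sampling_distribution} to leading order in the step, as a natural-gradient step. I will then show that, for the Laplace surrogate \eqref{eq:laplace_surrogate} under $\measureU(\cdot\mid\controltrajMean)=\mathcal{N}(\controltrajMean,\Sigma)$, this step coincides with the softmax-weighted average \eqref{eq:pi_update}.

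\textbf{Step 1: closed forms.} For two Gaussians with the same covariance $\Sigma$, the KL divergence is exactly quadratic:
\[
KL\bigl(\mathcal{N}(\controltrajMean+\deltaMean,\Sigma)\,|\,\mathcal{N}(\controltrajMean,\Sigma)\bigr)=\tfrac12\,\deltaMean^\top\Sigma^{-1}\deltaMean.
\]
So the constraint is an ellipsoid whose metric $\Sigma^{-1}$ is the Fisher information with respect to the mean.

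Next I differentiate $\xi$ in $\controltrajMean$. Write $Z(\controltrajMean)=\int e^{-\rho J(u\mid x_0)}\,\mathcal{N}(u;\controltrajMean,\Sigma)\,du$. Using $\nabla_{\controltrajMean}\mathcal{N}(u;\controltrajMean,\Sigma)=\Sigma^{-1}(u-\controltrajMean)\,\mathcal{N}(u;\controltrajMean,\Sigma)$, this differentiation needs no gradient of $J$ and gives
\[
\nabla\xi=-\frac{1}{\rho}\frac{\nabla Z}{Z}=-\frac{1}{\rho}\Sigma^{-1}\bigl(\mathbb{E}_{w}[u]-\controltrajMean\bigr).
\]
Here $\mathbb{E}_w$ is the expectation under the tilted density proportional to $e^{-\rho J}\,\mathcal{N}(\controltrajMean,\Sigma)$.

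\textbf{Step 2: the Lagrangian.} I linearize the objective, $\xi(J\mid\measureU(\cdot\mid\controltrajMean+\deltaMean))\approx\xi(J\mid\measureU(\cdot\mid\controltrajMean))+\nabla\xi^\top\deltaMean$, and keep the constraint exactly. The Lagrangian is $\nabla\xi^\top\Delta+\lambda\bigl(\tfrac12\Delta^\top\Sigma^{-1}\Delta-\beta\bigr)$. Its stationarity condition gives $\Delta=-\lambda^{-1}\Sigma\nabla\xi$, and $\lambda$ is fixed by $\beta$ through $\tfrac12\lambda^{-2}\nabla\xi^\top\Sigma\nabla\xi=\beta$.

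To match the sign and scale convention of \eqref{eq:grad_fisher_kl_laplace}, I will take the minimizer to point along the descent direction. I absorb the positive constant into the multiplier $\gamma$ and set $\deltaMean:=\Sigma\nabla\xi$, so that $\controltrajMean^{(r+1)}=\controltrajMean^{(r)}-\gamma\,\Sigma\nabla\xi$.

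\textbf{Step 3: recovering PI.} Substituting the gradient from Step 1 gives
\[
\controltrajMean^{(r+1)}=\controltrajMean^{(r)}+\frac{\gamma}{\rho}\bigl(\mathbb{E}_w[u]-\controltrajMean^{(r)}\bigr).
\]
With the choice $\gamma=\rho$, which corresponds to a particular $\beta$ equal to the KL size of the PI step, this becomes $\controltrajMean^{(r+1)}=\mathbb{E}_w[u]$. Finally I estimate $\mathbb{E}_w[u]$ by self-normalized importance sampling with samples \eqref{eq:sampling_pro}. The Gaussian density cancels in the ratio, which leaves exactly the weights \eqref{eq:PI_weight} and hence \eqref{eq:pi_update}.

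\textbf{Main obstacle.} The delicate step is the bookkeeping of the multiplier. The optimization is exact only after linearizing $\xi$, so the identification $\gamma=\rho$ ties $\beta$ to the particular step size rather than to a free constant. I will state the result as equivalence for the appropriate $\beta$, or equivalently for $\gamma$ as the multiplier. I will also note explicitly that the finite-$N$ update is the Monte Carlo estimate of the exact step.
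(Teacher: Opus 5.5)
Your route is essentially the paper's. Both arguments use the equal-covariance Gaussian KL as the Fisher quadratic form $\tfrac{1}{2}\Delta^\top\Sigma^{-1}\Delta$, Lagrangian stationarity, the score-function gradient of $\xi$, and $\bar{u}+\mathbb{E}_w[u-\bar{u}]=\mathbb{E}_w[u]$. The paper differentiates $\xi$ at the shifted mean and then replaces the shifted weights by unshifted ones in \eqref{eq:appr_jalal}, citing small $\beta$. Your upfront linearization is the same first-order approximation, stated more cleanly. Your explicit Monte Carlo step also makes explicit the link to \eqref{eq:pi_update} that the paper leaves implicit.

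The problem is the identification of $\Delta$ and $\gamma$ in your Step 2, and it is tied to the factor $-1/\rho$.
\begin{itemize}
\item You keep $\nabla\xi=-\frac{1}{\rho}\Sigma^{-1}(\mathbb{E}_w[u]-\bar{u})$, which is correct. The paper's passage from \eqref{eq:grad_l} to \eqref{eq:grad_laplace_surrogate_change_of_variable} does not carry this factor. That omission is exactly why the multiplier cancels in the paper: \eqref{eq:stationary} then gives $-\gamma\Delta^*=\mathbb{E}_w[u-\bar{u}]$, hence $\bar{u}-\gamma\Delta^*=\mathbb{E}_w[u]$ for every $\gamma$.
\item With the correct factor, stationarity gives $-\gamma\Delta^*=\Sigma\nabla\xi$ identically. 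So the literal reading ($\Delta^*$ the minimizer, $\gamma>0$ its multiplier) yields $\bar{u}-\gamma\Delta^*=\bar{u}-\frac{1}{\rho}(\mathbb{E}_w[u]-\bar{u})$. That is an ascent step, for every $\beta$.
\item Your relabeling $\Delta:=\Sigma\nabla\xi$ is therefore not absorbing a positive constant: it equals $-\lambda\Delta^*$ in your notation. Your choice $\gamma=\rho$ is the reciprocal of your actual multiplier $\lambda=1/\rho$. So the claim ``equivalently for $\gamma$ as the multiplier'' is not right.
\end{itemize}

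What your computation does establish is a corrected statement:
\begin{itemize}
\item $\bar{u}^{(r+1)}=\bar{u}^{(r)}+\Delta^*=\bar{u}^{(r)}-\rho\,\Sigma\nabla\xi$,
\item where $\Delta^*$ minimizes the linearized problem with $\beta=\tfrac{1}{2}\|\mathbb{E}_w[u]-\bar{u}\|^2_{\Sigma^{-1}}$, which gives multiplier $1/\rho$.
\end{itemize}
State the result in that form. Note that \eqref{eq:grad_fisher_kl_laplace}, read literally, only comes out with the paper's rescaled gradient. Equivalently, it holds if you take the objective to be $\rho\,\xi$ and write the Lagrangian as $\rho\,\xi-\gamma(\mathrm{KL}-\beta)$.
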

We invite readers who are interested in the details to find proof in~\Cref{proof:fisher} in the supplementary material. 

\begin{remark}The Lagrange multiplier $\gamma=\frac{d\xi(J\mid \measureU;\controltrajMean+\Delta; \beta)}{d\beta}$ decides how much the optimal surrogate function will change per $\beta$ change. In practice, this constrained optimization problem is not solved explicitly; instead, the update in~\eqref{eq:pi_update} is applied directly, in which case the parameter $\beta$ could vary per iterations. 
\end{remark}

\begin{remark}[Particle dynamic under parameterized distribution updates]\label{remark:particle_dyn_mppi}
Consider two arbitrarily chosen particles (random control sequences) from iterations $r$ and $r+1$, the difference between them can be written as
\begin{align}
  \!\!u_{0:T-1}^{(r, i)}\!\!-u_{0:T-1}^{(r+1, j)} &\!= \controltrajMean^{(r)}\!+\!\Sigma^{\frac{1}{2}}\epsilon^{(r,i)}\!-\! \controltrajMean^{(r+1)} \!-\! \Sigma^{\frac{1}{2}}\epsilon^{(r+1,j)} \nonumber\\
                          \!&\!=  \Sigma^{\frac{1}{2}}(\epsilon^{(r,i)} \!-\!\epsilon^{(r+1,j)})\! +\!  \frac{1}{\gamma}\Delta( \controltrajMean),\label{eq:particle_dyn_mppi}
\end{align}
where $\{\epsilon^{(r,i)}\}$ are i.i.d. standard normal variables.
Thus, the potential improvements of $u^{(r+1, i)}$ over $u^{(r, j)}$ come from the major effect of $\Delta(\controltrajMean)$ plus a random exploration term $\Sigma^{\frac{1}{2}}(\epsilon^{(r,i)} -\epsilon^{(r+1,j)})$. Since the particles are reset after each iteration, there is no distinction between them (particles are "forgotten" after each iteration), resulting in  homogeneous  exploration. Any potential improvements beyond $\Delta(\controltrajMean)$ comes from repeatedly exploring the same mechanism of random directions.
\end{remark}

\begin{remark}
Note that when $\cbotemp$ is big enough in \eqref{eq:laplace_surrogate}:
\begin{align}
    \xi(\statecontrolpathcost\mid \measureU(\cdot\mid \controltrajMean))\approx \inf\statecontrolpathcost(\controltraj\mid x_0).
\end{align}
Thus, optimizing the surrogate is approximately equivalent to optimizing the lower bound, while the constraint penalizes large changes by restricting how much the distribution can shift at each iteration.
With a finite sample size, this involves choosing the best performing $\controltraj$ among the sampled population. 
\end{remark}

\begin{remark}[Curse of dimensionality in finite sample approximation of Gaussian expectation]\label{remark:mppi_lower_bound_opt_with_finite_sample_size}
When the decision variable lies in an extremely high-dimensional space, a finite sample size cannot realistically approximate the distribution;
thus, the lower bound of a finite sample does not approximate the mathematical formulation of~\eqref{eq:laplace_surrogate} faithfully. 
\end{remark}


\subsubsection{Covariance matrix adaptation}\label{sec:cma}
One way to mitigate the issue highlighted in~\Cref{remark:mppi_lower_bound_opt_with_finite_sample_size} is to adapt the covariance matrix of the sampling distribution. This allows the sampled population to concentrate along promising directions around the current solution while reducing exploration in less useful directions. This idea is known as Covariance Matrix Adaptation (CMA), and CMA-ES builds on it with a widely used evolution-path mechanism~\citep{hansen2001completely}.


For clarity, we use a simplified presentation of CMA-ES following the viewpoint of~\cite{jordana2025introduction}; the complete algorithm, including the full evolution-path machinery, is treated in~\cite{hansen2001completely}. 
Specifically, CMA-ES maintains a Gaussian search distribution over the control sequences and adapts its mean, covariance, and step size based on ranked samples~\cite{hansen2001completely}. At iteration $r$, offspring are sampled according to~\eqref{eq:cma_sample}, then evaluated through the cost function $J$:
\begin{align}
  u_{0:T-1}^{(r,i)} &= {\controltrajMean}^{(r)} + \alpha^{(r)} A^{(r)} \epsilon_i,\quad 
	  \epsilon_i\sim\mathcal{N}(0,I),\label{eq:cma_sample}
\end{align}
where $A$ is the decomposition of the covariance matrix $\Sigma$ as stated below:
\begin{equation}
 \Sigma^{(r)} = (\alpha^{(r)})^2 A^{(r)} (A^{(r)})^\top.    
\end{equation}
The mean is then updated only using an elite proportion of the current population and a moving average trick to stabilize the updates, which we formulate in~\eqref{eq:cma_mean_ada}. The covariance is adapted to capture successful search directions as~\eqref{eq:cma_cov_ada}, using $y^{(r,i)}$ in~\eqref{eq:cma_y}, which is the deviation of particle $i$ from the current population mean.
\begin{align}
\controltrajMean^{(r+1)} &= (1-\alpha^{(r)})\controltrajMean^{(r)} + \alpha^{(r)} \sum_{i=1}^{N_e} w^{(r,i)}\, u_{0:T-1}^{(r,i)}\label{eq:cma_mean_ada}\\
  \Sigma^{(r+1)} &= (1-\alpha^{(r)} )\Sigma^{(r)} + \alpha^{(r)} \sum_{i=1}^{N_e} w^{(r,i)}\, y^{(r,i)} {y^{(r,i)}}^\top \label{eq:cma_cov_ada}\\
  y^{(r,i)} &= u_{0:T-1}^{(r,i)}-{\controltrajMean}^{(r)}\label{eq:cma_y},
\end{align}
where $N_e$ is the number of elite particles. One choice of pertaining $w^{(r,i)}$, the weight for particle $i$, can be done via calculating~\eqref{eq:PI_weight}. 
The step size $\alpha^{(r)}$ is adjusted by a separate evolution path to control the overall search scale. 
This mechanism allows CMA-ES to adaptively stretch or shrink the sampling distribution along  promising directions.


As pointed out by \cite{akimoto2010bidirectional, ollivier2017information}, the update rule of CMA-ES (without the evolution path~\cite{hansen2001completely}) can be recovered by applying \eqref{eq:distribution_update_rule} with the gradient defined with respect to both the mean and covariance parameters.

\begin{remark}[Connection between CMA and CEM]\label{remark:cem_special_case_cma}
Setting $w^{(r,i)}=\frac{1}{N_e}$ with elite particles $N_e<N$ in~\eqref{eq:cma_cov_ada} gives the Cross Entropy Method (CEM)~\cite{rubinstein2004cross}.
\end{remark}

\begin{remark}[curse of distribution parametrization]\label{remark:curse_of_distribution_parameterization}
When a parameterized distribution is employed, its shape can be adjusted only through a limited number of parameters, which constrains the expressive flexibility of $\measureU(\controltraj)$.
The Gaussian distribution has additional innate geometric structures, such as symmetry, which typically do not fit control signal distributions. 
For instance, flipping the control signal with respect to an arbitrary center might result in a control signal that leads to undesirable behavior. In~\Cref{fig:gaussian_vs_irregular}, we illustrate the difference between a Gaussian distribution and an irregular, non-parametric distribution. The irregular distribution can develop longer, potentially asymmetric tails along important directions, whereas a shrinking Gaussian is much less flexible in accommodating such behavior. In~\Cref{sec:cbo}, we introduce CBO and explain how an irregular, non-parameteric distribution can be achieved, which also has the ability to concentrate on important regions.
\end{remark}

\begin{figure}[h!]
\centering\includegraphics[width=0.5\linewidth]{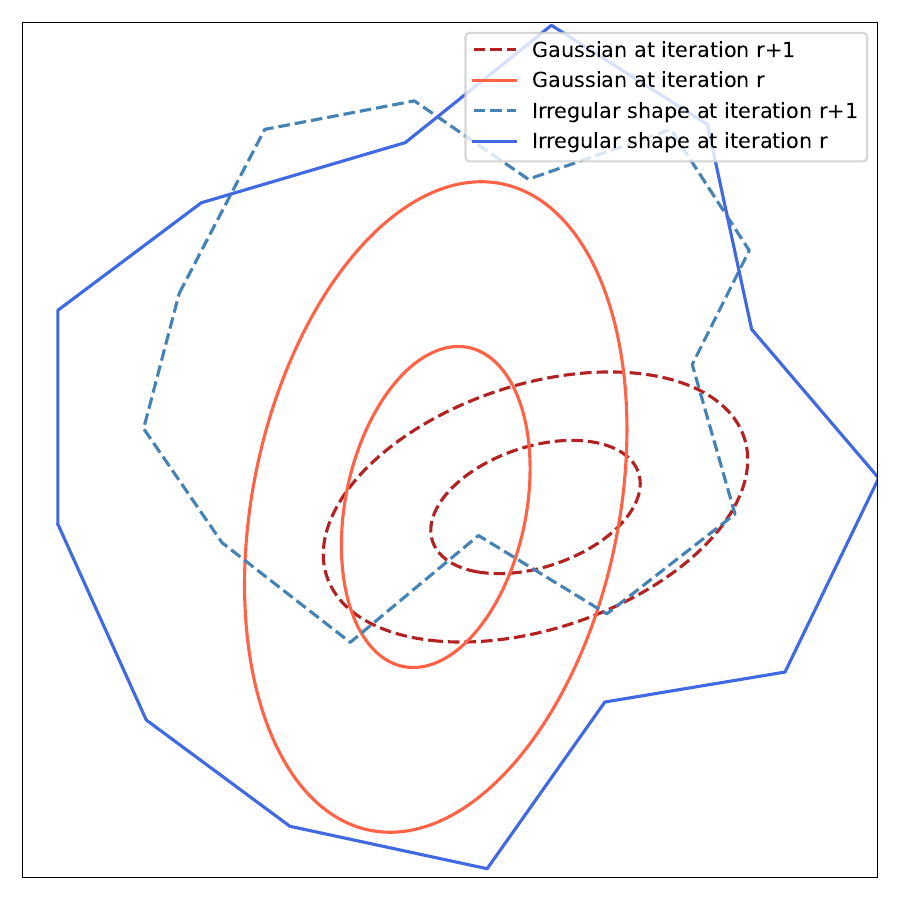}
    \vspace{-2mm}
    \caption{An irregular, non-parametric distribution can focus probability on important directions by having longer tails, whereas a shrinking Gaussian is constrained by its fixed symmetric shape and has less flexibility.}~\label{fig:gaussian_vs_irregular}
    \vspace{-5mm}
\end{figure}




\section{Consensus based optimization (CBO)}\label{sec:cbo}

To address the challenges of parameterized sampling-based optimization discussed in~\Cref{sec:preliminaries}, we introduce consensus-based optimization (CBO) to the robotics community.
Instead of explicitly parameterizing the distribution $\measureU^{(r)}$ at iteration~$r$, CBO uses a population of particles ${\{\controltrajTimeParticle\}}$ 
to approximate the distribution $\measureU^{(r)}$, where $i$ indexes the particles of the population.
Under mild assumptions, CBO is proven to converge to the global optimum~\cite{carrillo2018analytical, fornasier2024consensus}.

\subsection{Basics of CBO}
In CBO, each particle follows the following stochastic differential equation (SDE): 
\begin{equation}
d\controltrajTimeParticle= -\lambda\left(\controltrajTimeParticle-\cbomean^{(\diffusionTime)}\right)d\diffusionTime + \sigma {\left|\controltrajTimeParticle - \cbomean^{(\diffusionTime)}\right|}d\Brownian^{(\diffusionTime, i)},\label{eq:cbo_dyn}
\end{equation}
where $\diffusionTime$ denotes the iteration index, 
$\lambda$ is the decay rate 
governing how fast particle $i$ drifts toward the consensus point $\cbomean^{(\diffusionTime)}$ with drift rate $-\lambda(\controltrajTimeParticle-\cbomean^{(\diffusionTime)})$. The consensus point is defined by
%
\begin{equation}
  \cbomean^{(\diffusionTime)}=\sum_i w^{(\diffusionTime, i)}u_{0:\phorizon}^{(\diffusionTime, i)},\label{eq:cbo_mean_particle_dirac}
\end{equation}
where $w^{(r,i)}$ denotes the weight of particle $i$ at iteration $\diffusionTime$ defined as in~\eqref{eq:PI_weight}.
%
%
$\sigma\in\mathbb{R}$ in~\eqref{eq:cbo_dyn} controls the intensity of exploration via Brownian motion $\Brownian^{(\diffusionTime, i)}$. In other words, larger $\sigma$ leads to more exploration. 
In discrete time iteration,~\eqref{eq:cbo_dyn} becomes
\begin{align}
 \Delta u^{(r,i)}= \controltraj^{(r+1, i)}- \controltraj^{(r, i)} &=-\lambda \left(\controltraj^{(r, i)}-\cbomean^{(r)}\right)\Delta r \nonumber \\
  & + \sigma \sqrt{\Delta r} {\left|\controltraj^{(r, i)} - \cbomean^{(r)}\right|}\Delta \Brownian^{(r, i)}\label{eq:cbo_dyn_discrete}
\end{align}
where $\Delta \Brownian^{(r, i)}$ is sampled from standard normal distribution $\mathcal{N}(0, I)$. 
Algorithm~\ref{algo:CBO} summarizes the steps of CBO, where the particle-wise for loops can be parallelized on a GPU. This parallel structure keeps CBO computationally fast in practice: in the experiment section, we demonstrate that empirically its per-iteration runtime is comparable to CMA-ES and MPPI, as shown in~\Cref{tab:runtime-comparison} in~\Cref{subsec:exp_d_cartpole}.

\begin{algorithm}[h!]
\DontPrintSemicolon
\KwInput{Initial state $x_0 \in \mathcal{S}$, SDE integration time $\Delta r$, initial population $\{\controltraj^{(0,i)}\}$ of size $N$. $\sigma,\lambda$}
\While{stopping criterion is not met, }{
\For{$i=1,\ldots, N$}{Evaluate $J$ value for particle $i$}      
Calculate target consensus point $\cbomean$ with~\eqref{eq:cbo_mean_particle_dirac}\; 
\For{$i=1,\ldots,N$}{
  Calculate drift and exploration term in~\eqref{eq:cbo_dyn_discrete} to update particle\;   
}
}
\KwOutput{final population $\{\controltraj^{(r^{*},i)}\}$ at iteration $r^{*}$}
\caption{CBO for trajectory optimization}\label{algo:CBO}
\end{algorithm}

\begin{remark}[]\label{remark:cbo_wasserstein_dyn}
In~\Cref{remark:particle_dyn_mppi}, we derived the particle dynamic for the PI update that reads as 
$u^{(r, i)}-u^{(r+1, j)} =  \Sigma^{\frac{1}{2}}(\epsilon^{(r,i)} -\epsilon^{(r+1,j)}) +  \frac{1}{\gamma}\Delta( \controltrajMean)$ and  discussed the homogeneous drift behavior and exploration. In contrast, CBO dynamic in~\eqref{eq:cbo_dyn} indicates that different particles exhibit different behaviors depending on how far away the particle is from the target consensus point. The exploration term is also particle dependent, again depending on how far away the particle is from the target consensus point; particles farther away explore more.
\end{remark}

\begin{remark}[$\lambda$ and $\Delta r$]
In~\eqref{eq:cbo_dyn_discrete}, $\Delta r$ corresponds to Euler-Maruyama integration time for the stochastic differential equations. Numerically, $\Delta r$ can be absorbed into $\lambda$ as a lumped up parameter. 
  When $\lambda \Delta r=1$, $\sigma= \sqrt{\Delta r}$, the CBO dynamic in~\eqref{eq:cbo_dyn_discrete} reduces to 
\begin{align}
  \controltraj^{(r+1, i)}=&\cbomean^{(r)} + \sigma^2 {\left|\controltraj^{(r, i)} - \cbomean^{(r)}\right|}\Delta \Brownian^{(r, i)},\label{eq:cbo_dyn_discrete_reduce}
\end{align}
which indicates that at each iteration, each particle is resampled around the target consensus point with a variance proportional to its distance to the consensus point. This corresponds to a special case of CBO called consensus hopping~\cite{riedl2023gradient}. 

Note that the specific type of consensus hopping we defined in~\eqref{eq:cbo_dyn_discrete_reduce} is different from the sample generation in the PI update in~\eqref{eq:sampling_pro}, 
where each particle at the current generation (iteration)  is resampled around the target consensus point 
with a fixed covariance $\Sigma$, independent of the particle's distance to the consensus point.
\end{remark}

\subsection{Advantages of CBO in global optimization}
We use the example of~\Cref{fig:finite_locals_unique_global} as an illustration of the advantages of CBO where we constructed a cost function landscape in a contour plot with a unique global optimizer (marked as blue star) and several local optimizers (marked as orange disks). As a \textbf{hypothetical setting} for the purpose of illustration only, we assume at the current iteration, the target consensus is marked with a red cross in the figure. The following key observations explain the advantage of CBO over other sampling-based methods.
\begin{figure}[htpb]
  \centering
  \includegraphics[width=0.99\linewidth]{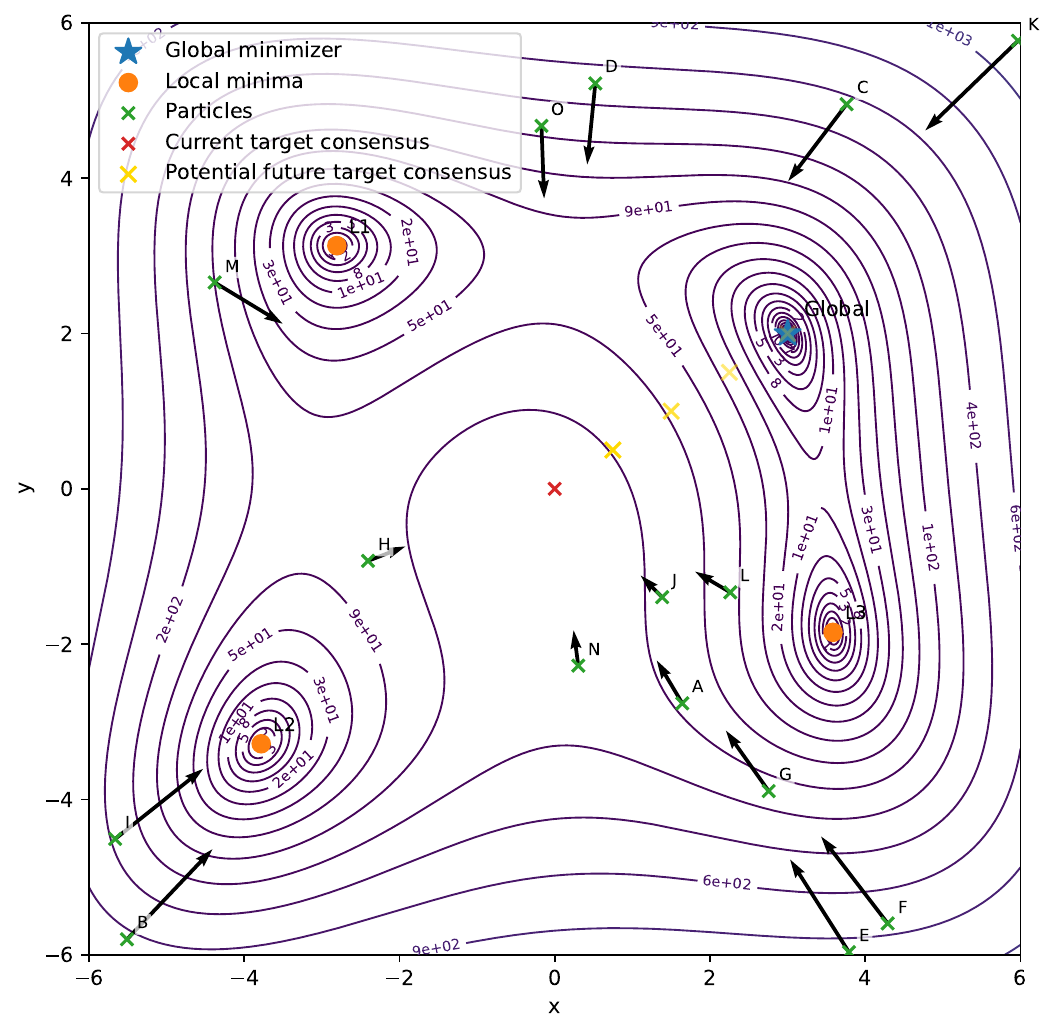}
  \vspace{-7mm}
  \caption{A cost function landscape in a contour plot constructed with a unique global optimizer (marked star) and several local optimizers for illustration of how CBO works and its advantages. See text for more explanation.
  }\label{fig:finite_locals_unique_global}
\end{figure}

\begin{itemize}
\item The dynamic of each particle is not directly driven by the gradient of the objective function, but by a drift term that “drags” or “attracts” the particle toward the target consensus point at the current iteration in~\eqref{eq:cbo_dyn}. As a result, the particle dynamics are agnostic to local optima of the objective function. For instance, in the illustrative example of~\Cref{fig:finite_locals_unique_global}, particle $B$ at the lower-left corner can travel through the local minimum near $(-4,-3)$ since its dynamic aims to drive it toward the goal position of the target consensus point, instead of the local minimum.
\item The inertia of each particle enables local exploration around the particle position from the previous iteration. In contrast, sampling-based methods discard particle histories (see~\eqref{eq:particle_dyn_mppi}). For instance, in the illustrative~\Cref{fig:finite_locals_unique_global}, the drift vectors (arrows in the plot) for particle $K,C,O,D$ determines a dominating direction towards the current consensus point, however, the exploration term in~\Cref{eq:cbo_dyn} adds noise on top, \textbf{proportional to the distance to the current consensus point} (Particles sitting on the consensus point has no exploration, a necessary condition for convergence. Particles far way get explored more to search for better solution), avoiding a straight-line behavior and allowing the particles to explore their neighborhood. This gives them a good chance to approach the vicinity of the global optimizer. 
\item As particles move toward the target consensus point, each particle explores along different directions. Along these drifting paths, once a better solution is found, the target consensus point shifts toward that solution, and the drift terms of all particles change accordingly, inducing a new set of exploration directions. This results in a more diverse exploration mechanism than the homogeneous exploration used in the PI updates (see~\eqref{eq:particle_dyn_mppi}). For instance, as particle $K$ and $C$ in~\Cref{fig:finite_locals_unique_global} being pulled toward the current target consensus point $(0,0)$, they have a good chance to find better position with lower cost than the current consensus point since they are near the global optimizer. 
\item The non-parametric nature of CBO allows it to adapt to complex cost landscapes more effectively than parameterized distributions, which may struggle to capture intricate solution spaces. The particles gradually gather in the vicinity of the target consensus point, analogous to the covariance matrix shrinkage in CMA. Unlike CMA, however, the empirical distribution of CBO is not Gaussian-shaped, but instead exhibits an irregular shape as illustrated in Figure \ref{fig:gaussian_vs_irregular}.
\end{itemize}

\begin{remark}[How CBO alleviates the curse of dimensionality and the curse of distribution parameterization]\label{remark:cbo_lower_bound_opt_with_finite_sample_size}
In~\Cref{remark:mppi_lower_bound_opt_with_finite_sample_size}, we pointed out that when the decision variables have high dimensions, faithfully approximating the cost surrogate in~\eqref{eq:laplace_surrogate} becomes unrealistic.  
CMA tries to resolve this issue by adapting its covariance matrix to let finite samples better concentrate on favorable directions, with potential shrunken eigenvalues, but is still restricted by the parameterized form of the as we noted in~\Cref{remark:curse_of_distribution_parameterization}.
Instead, the population for CBO is evolving via the dynamic~\eqref{eq:cbo_dyn}, which offers a more flexible adaptive capacity to capture interesting regions of the decision variable space. In~\Cref{subsec:cbo_properties}, we demonstrate such an irregular, non-parametric distribution of solutions from CBO is effective in achieving global optimality.

\end{remark}

\subsection{Properties of CBO}\label{subsec:cbo_properties}
First we show CBO falls into the global optimization framework of~\eqref{eq:distribution_update_rule} with the surrogate function in~\eqref{eq:surrogate}
under the assumption that $J(u\mid x_0)$ satisfies the so-called inverse continuity property and relaxed Lipschitz continuous property, which we state below:
\begin{align}
  \frac{1}{L(u)}\|J(u\mid x_0)-J(u^{*}\mid x_0)\| &\le\|u-u^{*}\| \label{eq:lipschitz}\\
                                        &\le \frac{1}{\eta}\|J(u\mid x_0)-J(u^{*}\mid x_0)\|\label{eq:assumption_inverse_continuity} \\
                                        &\forall u\in B_{\kappa}(u^{*})\label{eq:cbo_ball}
\end{align} 
where $B_{\kappa}(u^{*})$ is a neighborhood of the optima $u^{*}$ with radius $\kappa$, and $\eta>0$ is a constant.

Formally, to demonstrate the irregular, non-parametric distribution of particles of CBO is effective in achieving global optimality, we have the following results:
\begin{proposition}\label{prop_cbo}
Without loss of generality, assume $J(u^{*} \mid x_0)=0$.
When~\Crefrange{eq:lipschitz}{eq:cbo_ball} hold, there exist $ \rho, r^*>0$ large enough, 
where 
\begin{equation}
  r^{*}\propto \frac{1}{2\lambda-n_a\times T \sigma^2}, \label{eq:iteration_steps_decay_selection}
\end{equation}
such that, with high probability, CBO improves the surrogate function following the formulation in~\eqref{eq:distribution_update_rule} after at most $r^{*}$ iterations, i.e., we have
  \begin{align}
  &\int J(\controltraj|x_0)d\measureU^{(r^{*})}(\controltraj\mid \controltrajMean^{(r^{*})})\nonumber\\  < &\int J(\controltraj|x_0)d\measureU^{(0)}(\controltraj\mid \controltrajMean^{(0)}).
  \end{align}

In addition, due to the structure of~$\measureU$ evolution driven by the CBO dynamic in~\eqref{eq:cbo_dyn},
the target consensus point converges to the global optimizer with high probability.

\end{proposition}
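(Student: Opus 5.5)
The plan is to reduce the statement to the mean-field convergence analysis of \cite{fornasier2024consensus} and then translate that result back into the surrogate language of \eqref{eq:surrogate}. The key quantity will be the variance-type Lyapunov functional
\[
V(\measureU^{(r)}) = \tfrac{1}{2}\int \|u-u^{*}\|^{2}\, d\measureU^{(r)}(u),
\]
where $\measureU^{(r)}$ is the (mean-field limit of the) particle law generated by \eqref{eq:cbo_dyn}.

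\textbf{Step 1: dynamics of $V$.} Apply It\^o's formula to $\|u^{(r,i)}-u^{*}\|^{2}$ under \eqref{eq:cbo_dyn}. The drift contributes $-2\lambda\langle u-u^{*},u-\cbomean\rangle$, and the diffusion contributes $\sigma^{2} n_a T\,\|u-\cbomean\|^{2}$, because the noise is isotropic in dimension $n_aT$. Writing $u-\cbomean=(u-u^{*})+(u^{*}-\cbomean)$ and applying Young's inequality gives
\[
\tfrac{d}{dr}V \le -(2\lambda - n_aT\sigma^{2})\,V + C\,\|\cbomean^{(r)}-u^{*}\|\sqrt{V} + C\,\|\cbomean^{(r)}-u^{*}\|^{2}.
\]

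\textbf{Step 2: quantitative Laplace principle (main obstacle).} We need to show that $\|\cbomean^{(r)}-u^{*}\|$ can be made arbitrarily small, uniformly on $[0,r^{*}]$, by taking $\rho$ large. This is where the inverse continuity \eqref{eq:assumption_inverse_continuity} enters. The argument splits the integral defining $\cbomean$ into the region $B_{\kappa}(u^{*})$ and its complement. Outside $B_{\kappa}(u^{*})$, the softmax weights \eqref{eq:PI_weight} are exponentially suppressed relative to a small ball around $u^{*}$. Inside, the inequality $\|u-u^{*}\|\le J/\eta$ converts smallness of the cost into closeness to $u^{*}$.

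The resulting estimate has the form
\[
\|\cbomean-u^{*}\| \lesssim \frac{(q+\epsilon)^{\nu}}{\eta} + \frac{e^{-\rho q}}{\measureU^{(r)}(B_{\epsilon}(u^{*}))}.
\]
It therefore requires a positive lower bound on the mass $\measureU^{(r)}(B_{\epsilon}(u^{*}))$ over $[0,r^{*}]$. That lower bound is the delicate part. I would obtain it as in \cite{fornasier2024consensus}, by testing the Fokker--Planck equation against a smooth mollifier supported on the ball and using the fact that the diffusion keeps mass there for finite times.

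\textbf{Step 3: Gr\"onwall and choice of $r^{*}$.} With $\rho$ chosen large, the bound from Step 2 is at most a small multiple of $\sqrt{V}$. Gr\"onwall then yields $V(r)\le V(0)e^{-(2\lambda-n_aT\sigma^{2})r/2}$. Reaching a target accuracy $\varepsilon$ thus takes time of order $\log(V(0)/\varepsilon)/(2\lambda-n_aT\sigma^{2})$, which is \eqref{eq:iteration_steps_decay_selection}; this requires $2\lambda>n_aT\sigma^{2}$.

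\textbf{Step 4: surrogate decrease.} By the Lipschitz-type bound \eqref{eq:lipschitz} on $B_{\kappa}(u^{*})$, and with tails controlled by the growth of $J$, we get
\[
\int J\, d\measureU^{(r^{*})} \le \bar L \int \|u-u^{*}\|\, d\measureU^{(r^{*})} \le \bar L\sqrt{2V(r^{*})}.
\]
The right-hand side can be made smaller than $\int J\, d\measureU^{(0)}$, which is strictly positive unless the initial law is already concentrated at $u^{*}$. This gives the claimed strict inequality.

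\textbf{Step 5: consensus point and finite $N$.} Convergence of the consensus point to $u^{*}$ follows by combining $V\to0$ with the Step 2 estimate. The phrase ``with high probability'' accounts for the finite-$N$ particle system. I would pass from particles to mean field using the known mean-field approximation bound of \cite{fornasier2024consensus} (a Markov inequality on the $N$-particle deviation), and I would not reprove it.
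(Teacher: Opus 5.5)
Your proposal is correct at the paper's level of rigor and uses the same skeleton as the paper's proof: the Lyapunov functional $\mathcal{V}$, its exponential decay at rate $\propto 2\lambda-n_aT\sigma^2$ from the mean-field analysis of \cite{fornasier2024consensus}, the bound $\int J\,d\measureU^{(r^*)}\le L\sqrt{2\mathcal{V}(\measureU^{(r^*)})}$ from the Lipschitz condition plus Cauchy--Schwarz, and a particle-to-mean-field transfer for the ``high probability'' part. The differences are in the details.

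\textbf{Steps 1--3.} The paper simply cites Theorem 3.7 of \cite{fornasier2024consensus} as \Cref{lemma:lyapunov_exp_decay}; you sketch its proof (It\^o, quantitative Laplace principle, mass lower bound, Gr\"onwall). This makes one caveat visible. The decay holds only until $\mathcal{V}$ first reaches a target accuracy $\varepsilon$, and only for $\rho\ge\rho_0(\varepsilon)$. So your ``$V\to 0$'' in Step 5 should be read as ``$\mathcal{V}\le\varepsilon$ at some $r\le r^*$''. The paper's unqualified \Cref{lemma:lyapunov_exp_decay} must be read the same way.

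\textbf{Final comparison.} The paper rewrites $\sqrt{2\mathcal{V}(\measureU^{(0)})}$ as $\frac{1}{\eta}\|J\|_{L^2(\measureU^{(0)})}$ via inverse continuity and introduces $\zeta(J,\measureU^{(0)})$ to get a multiplicative contraction. You instead push $\bar L\sqrt{2\mathcal{V}(\measureU^{(r^*)})}$ below the fixed positive number $\int J\,d\measureU^{(0)}$. Your version is simpler. It also avoids applying the inverse-continuity bound, which is assumed only on $B_\kappa(u^*)$, to the initial law. The paper's version buys an explicit prefactor.

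\textbf{Extra assumptions.} You rightly flag that bounding $\int J$ off $B_\kappa(u^*)$ needs a growth condition on $J$. Step 2 likewise needs $J$ bounded away from $0$ off the ball, as in the assumptions of \cite{fornasier2024consensus}. The paper sidesteps both by asserting that the particles have entered $B_\kappa(u^*)$, which small $\mathcal{V}$ does not by itself guarantee. So neither argument avoids assumptions beyond \eqref{eq:lipschitz}--\eqref{eq:cbo_ball}.

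\textbf{Step 5.} The algorithm runs the discretization \eqref{eq:cbo_dyn_discrete}, not the SDE. You therefore need to carry the Euler--Maruyama error in $\Delta r$ alongside the $N^{-1}$ term, as Theorem 3.8 of \cite{fornasier2024consensus} and \Cref{lem:global_opt_cbo} do.
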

Proof is in~\Cref{proof:cbo} of the supplementary material.  

\begin{remark}[choice of decay rate $\lambda$]\label{remark:choice_lambda_cbo}
  Equation~\eqref{eq:iteration_steps_decay_selection} gives us a convenient reference of selection of decay rate $\lambda$ with respect to the noise level $\sigma^2$ to ensure convergence of the dynamics of~\eqref{eq:cbo_dyn_discrete}. Concretely, the convergence can be ensured by $2\lambda > n_a \times T \times \sigma^2 $. For hyperparameter tuning, we recommend simply calculate the lowest $\lambda$ that satisfies this formula and then try a grid of values, while ensuring $\lambda \Delta r<1$. Increasing $\lambda$ improves convergence rate but too large $\lambda$ breaks the performance, see an empirical comparision of different $\lambda$ in~\Cref{fig:sweep_lambda} in the appendix to understand this tradeoff.
\end{remark}


\section{Experiment}\label{sec:experiment}
In this section, we show how the CBO algorithm can enable finding better solutions on three challenging robotics problems, highlighting different challenges in trajectory optimization for robotics. In what follows, first, we explain the general setting of our experiments and then present the results.

\subsection{Experiment setting}
To ensure fair comparisons, we use the same environment and the same initial particle population for all methods compared within each experiment whenever possible. In addition, we align shared hyperparameters across algorithms, for example, by using the same temperature $\rho$ and matching the $\sigma$  parameter of CBO with the exploration terms of the other methods.
We choose $\lambda$ in~\eqref{eq:cbo_dyn} according to~\Cref{remark:choice_lambda_cbo}.

For~\Cref{subsec:exp_d_cartpole} and~\Cref{subsec:exp_g1}, our implementation is based on \textit{Hydrax}~\cite{kurtz2024hydrax} and we use the implementation of CMA-ES from \textit{Evosax}~\cite{lange2023evosax}.  
Our implementation of the algorithms and experiments can be found in \url{https://github.com/Atarilab/cbo_to}.
\subsection{Long horizon planning}\label{subsec:exp_jungle}

In this example, we study a trajectory optimization problem involving a simplified dynamical system with a low-dimensional state space but an extremely long planning horizon. As illustrated in~\Cref{fig:jungle}, the agent is modeled as a point mass but represented by a chassis for better visualization (shown as a small blue rectangle) that travels from an initial position (black square located at the center left) towards a goal position (marked by a small red disk) through a confined space (depicted as a large green square). The tunnel has an upward-facing opening and is bounded by a wall on the left side (shown as black slats) that extends upward, as well as a shorter wall on the right side.
We deliberately design the left wall to extend upward so that reaching it corresponds to a local minimum, since the agent cannot penetrate the wall.
In many robotics control problems, feasible control signals lie in very “narrow” regions. To emulate this phenomenon, we randomly place obstacles (shown as dark green disks) such that only carefully chosen control signals allow the agent to successfully navigate through the obstacles and reach the tunnel.

%

We use the following simple first order dynamic model without inertia (i.e., velocity can be changed instantaneously):
\begin{align}
q_x(t+1) = q_x(t) + v \cos(\theta) \delta t,\\
q_y(t+1) = q_y(t) + v\sin(\theta) \delta t.
\end{align}
To avoid optimization in the radian space, instead of optimizing the sequence of $\theta \in[0,2\pi], v\in \mathbb{R}$, 
we use $v_x=v\cos(\theta), v_y=v\sin(\theta)$ as the decision variable. 
The decision making horizon is set to $T=100$ steps, and the control dimension is $n_a=2$. 
Despite the simplicity of the dynamic, 
the number of decision variables is $n_a\times T=200$, 
making it a challenging long horizon planning problem. 

The cost function is composed of the following terms: 
the primary term is the running cost, 
which is the distance to the goal (red disk) at each step. 
On top of that, we add the control loss term, 
plus a penalty term for not reaching the tunnel and further penalties when confronting the obstacles along the planned trajectories.
In summary, the cost function is defined as:
\begin{align}
  J =& \sum_{t=0}^{T-1} \left(\|q(t)-q_{goal}\|^2 + \gamma_v \|v(t)\|^2\right) \nonumber\\
     &+I_{\text{not in tunnel}}\left(q(T-1)\right) + \sum_{i=1}^{N_{obs}} I_{\text{obstacle}}(q_{0:T-1}),
\end{align}
where $I_{\text{not in tunnel}}$ is an indicator function that adds a large penalty if the final position is not inside the tunnel, and $I_{\text{obstacle}}$ is an indicator function that adds a penalty if the trajectory confronts any obstacle.
We set $\gamma_v=10.0$, $I_{\text{not in tunnel}}\left(q(T-1)\right)=1000$ if the agent is not inside the tunnel at the last step, $I_{obstacle}=1000\times n_c$, where $n_c$ is the number of obstacles confronted along the trajectory.
The parameters for penalty terms are chosen to ensure that circumventing the left wall and the obstacles is more cost-effective than confronting them directly, such that the optimal solution involves navigating through the tunnel and reaching the goal.
Note that the velocity is not bounded but only regularized in the cost function with $\gamma_v$.

In terms of collision handling, the left wall acts as a barrier for the agent to reach the tunnel and goal (red disk). To simplify, we restrain the agent to bounce back to its original place if the current velocity would bring the agent into contact with the wall. On the other hand, when confronting the disk barrier, we simply add a penalty to the cost function but do not restrain the dynamic accordingly (i.e., the agent can penetrate the disk obstacle but incurs a large cost).
The agent needs to make long term planning to avoid getting stuck in local minima, especially when the obstacles are placed closely. 

For each repetition of the experiment, we create the same environment for competing algorithms by randomly sampling obstacles.
We compare CBO with MPPI and CMA, where we implemented CMA according to~\eqref{eq:cma_mean_ada} and~\eqref{eq:cma_cov_ada}. We set population size $N=1000$. The covariance matrix for baselines is set to have initial diagonal value matching $\sigma=10$ for CBO, while we use an exponential decay of the $\sigma$ for CBO ensures the convergence behavior due to \Cref{remark:choice_lambda_cbo}.

The benchmark results are summarized in~\Cref{fig:benchmark_jungle_best_loss}. CBO outperforms CMA and MPPI by a large margin. In~\Cref{fig:jungle}, we visualize the trajectories generated by CBO, MPPI, and CMA, respectively. It can be observed that CBO is able to successfully navigate through the tunnel, avoiding all obstacles (in terms of obstacle avoidance, for simplicity, we consider the agent as a point mass), and reaches the tunnel, whereas MPPI and CMA get stuck in their local minima and fail to reach the tunnel.
\begin{figure}[h!]
    \centering
    \includegraphics[width=0.99\linewidth]{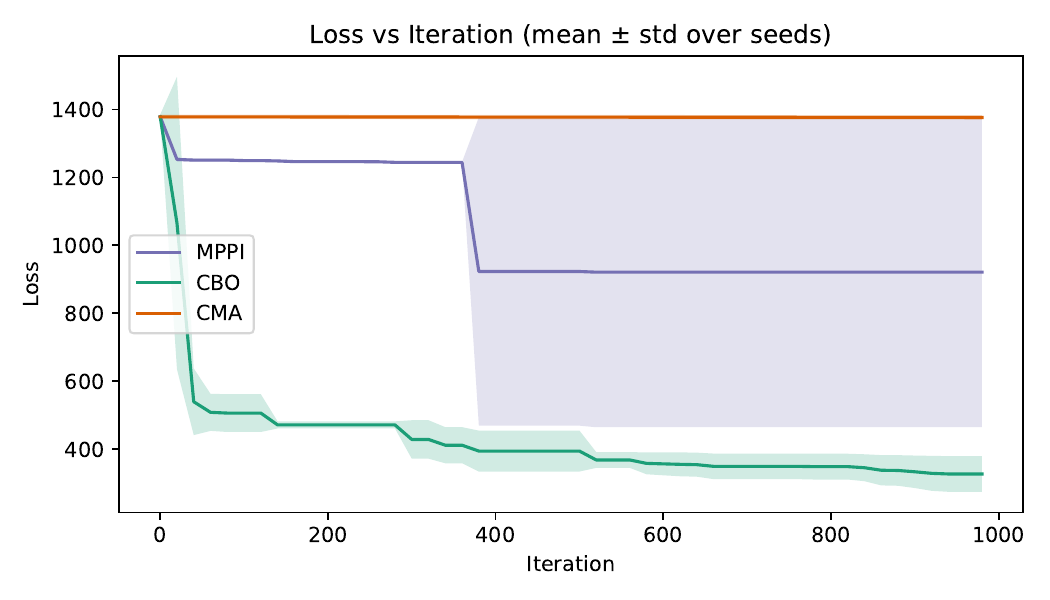}
    \vspace{-7mm}
    \caption{Long horizon planning: comparison of the best loss across the population at each iteration for the long horizon planning problem. CBO wins by a large margin. CMA implemented according to~\eqref{eq:cma_mean_ada} and~\eqref{eq:cma_cov_ada}.}\label{fig:benchmark_jungle_best_loss}
\end{figure}

\begin{figure}[h!]
    \centering
    \captionsetup[subfigure]{justification=centering} 
    \subfloat[CBO]{\includegraphics[width=0.44\linewidth]{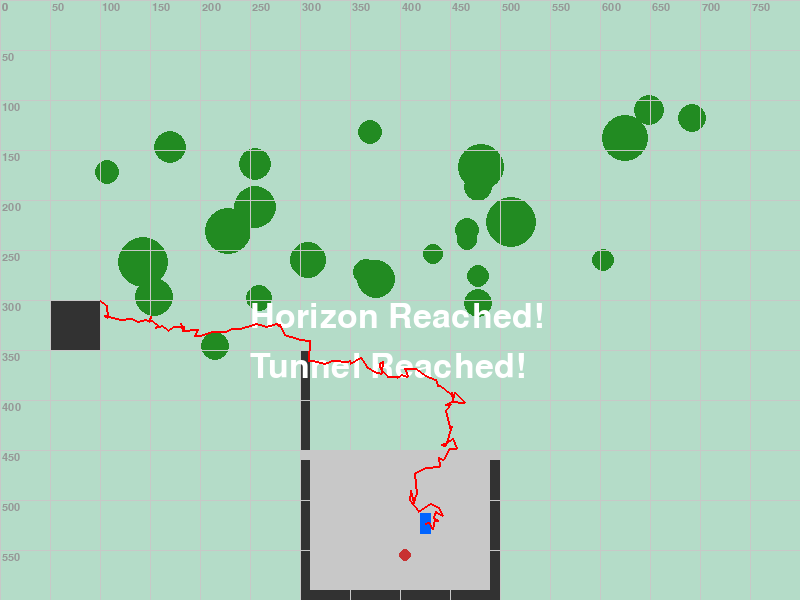}\label{fig:jungle_cbo1}}
   \hfill
   \subfloat[CBO]{\includegraphics[width=0.44\linewidth]{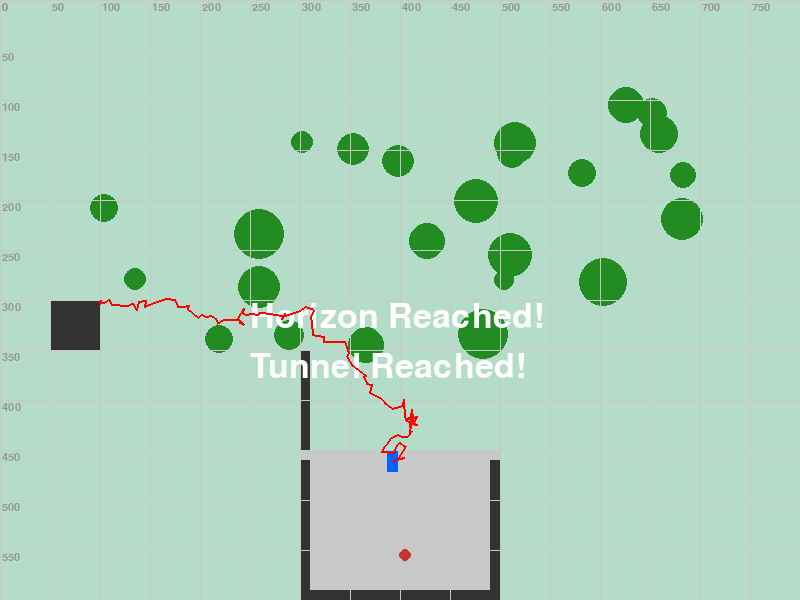}\label{fig:jungle_cbo2}}
   \vspace{1em} 
    \subfloat[MPPI]{\includegraphics[width=0.44\linewidth]{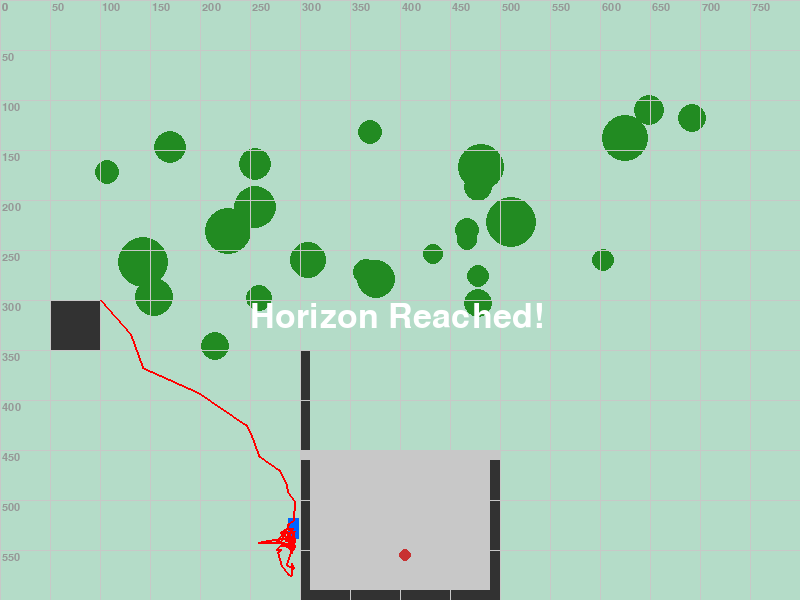}\label{fig:jungle_mppi1}}
    \hfill
    \subfloat[MPPI]{\includegraphics[width=0.44\linewidth]{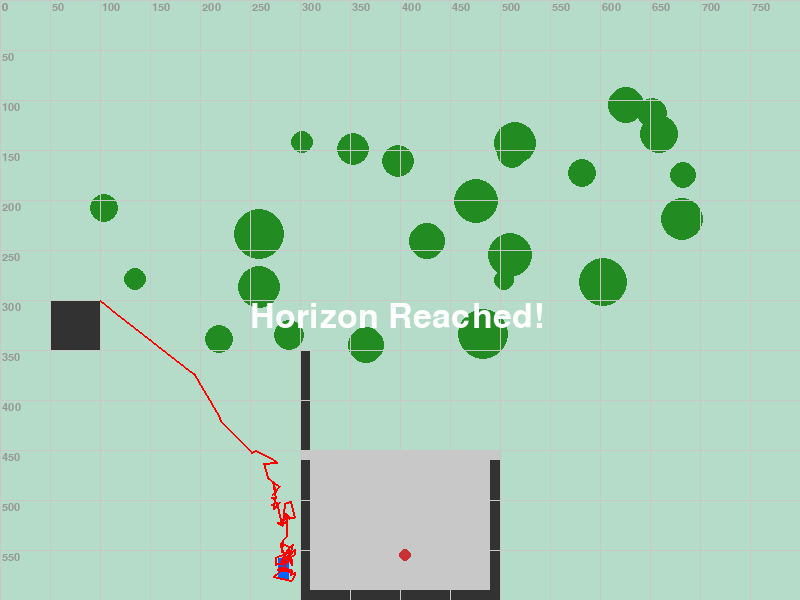}\label{fig:jungle_mppi2}}
    \vspace{1em}
     \subfloat[CMA]{\includegraphics[width=0.44\linewidth]{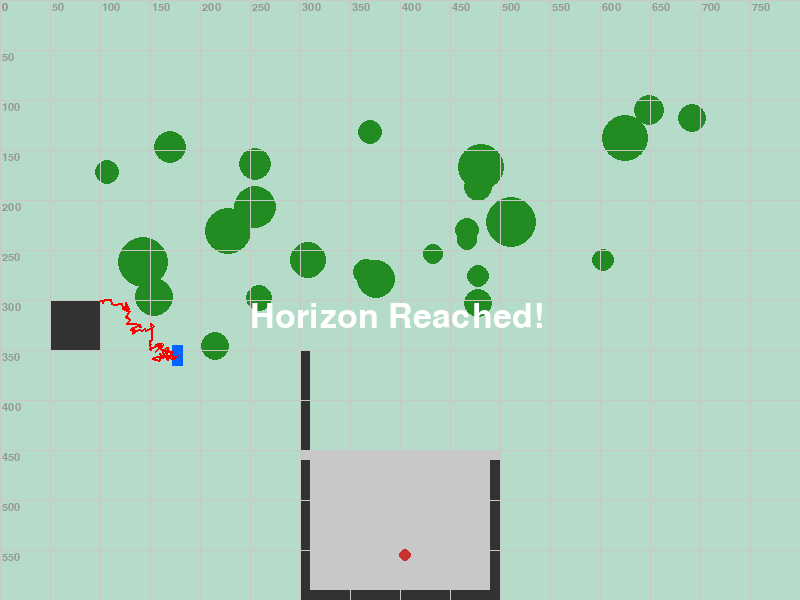}\label{fig:jungle_cma1}}
    \hfill 
    \subfloat[CMA]{\includegraphics[width=0.44\linewidth]{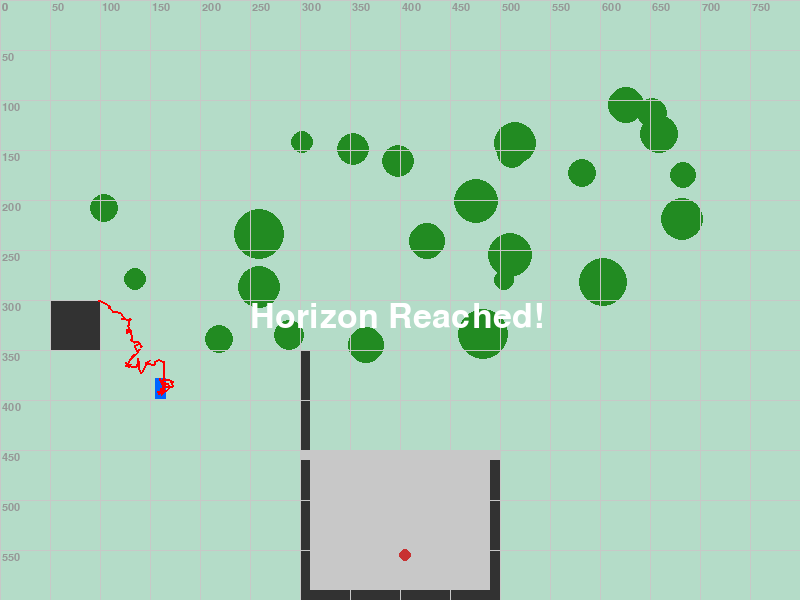}\label{fig:jungle_cma2}}
    \vspace{-2mm}
\caption{MPPI and CMA get stuck at a local minimum and fail to reach the tunnel. CBO succeeds in circumventing the left wall barriers and all obstacles (point mass agent) and reaches the tunnel. Each image shows the trajectory of a single experiment for different methods. Each column corresponds to one environment setting (spatial distribution of obstacles).}\label{fig:jungle}  
\vspace{-2mm}
\end{figure}

\begin{remark}[Why none of the algorithms reach the goal]
This scenario is extremely difficult because of the problem’s long-horizon nature and the extreme clutter in the scene.
In this task, although CBO performs significantly better than other methods, it still fails to reach the exact goal indicated by the red point inside the tunnel. The performance of CBO can be further improved by increasing the number of particles.
We refer the reader to Theorem 3.8 and Remark 3.9 in~\cite{fornasier2024consensus}, where, to each optimization problem, there is an associated rate of convergence $C_{MFA} N^{-1}$. $C_{MFA}$ characterizes the complexitiy of a problem. The corresponding $C_{MFA}$ for this task is very large, implying the need of a larger number $N$ of particles. 
\end{remark}

\subsection{Double Cartpole}\label{subsec:exp_d_cartpole}
The double cartpole (see an illustration in~\Cref{fig:double_cartpole}) has the cart that is allowed to move within the range $[-3.8m, +3.8m]$ and the pole with the cart forming a revolute joint and another pole connected to the base pole via a revolute joint. The generalized coordinates are
$\begin{bmatrix} q & \theta_1 & \theta_2 \end{bmatrix}^T$,
where $q$ is the cart position along the rail, $\theta_i$ with $i=1,2$ are the pole angles.
\begin{figure}[h!]
    \centering
\includegraphics[width=0.9\linewidth]{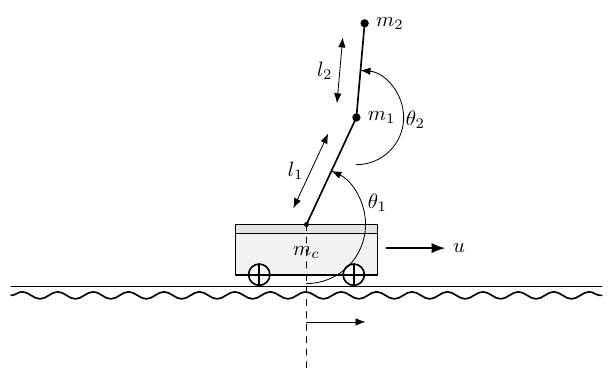}
    \vspace{-7mm}
    \caption{Double cartpole with cart mass $m_c= 1.0$ kg. Pole masses: $m_1 = m_2 = 0.1$ kg. Each pole segment has length $l_1 = l_2 = 1.0$m, giving a total pole-chain length of $2.0$m. The cart sits at 2m high in the world frame.}\label{fig:double_cartpole}
\end{figure}

The initial state is
\begin{align}
  x_0 &= \begin{bmatrix} q_0 \\ \dot{q}_0 \end{bmatrix}
  = {\begin{bmatrix} q & \theta_1 & \theta_2 & \dot{q} & \dot{\theta}_1 & \dot{\theta}_2 \end{bmatrix}^T}_{t=0}=0,
\end{align}
where $\theta_i=0$ corresponds to the pole hanging downward, and the
upright configuration is $\theta_i=\pi$.
Slider joint (cart) damping is set to $d_x = 10^{-4}$. To make the problem more challenging, we set friction loss to  $0$ for both revolute joints. The running cost is set to be the bound violation cost of control according to~\citep{Miura_Akai_Honda_Hara_2024} and the terminal cost is calculated as the summation of all the following terms with equal weighting:
\begin{itemize}
\item distance of the tip site from its upright target: $\|\mathtt{tip}_z - 4\|^2$,
  where $4\,\mathrm{m}$ is the upright tip height: the cart joint is located at height $2\,\mathrm{m}$ in the MuJoCo scene and the two $1\,\mathrm{m}$ pole segments add another $2\,\mathrm{m}$ when fully upright.
  \item $\|\mathtt{tip}_x - q\|^2$: pole tip aligned directly above the cart,   
    not leaning left or right.
  \item the distance of the cart to the center of the rail.
  \item the velocity of all degrees of freedom.
\end{itemize}
The control input is the actuator command $u$ applied to the cart, with actuator force $F=20u$, where $u\in [-u_{\max}, u_{\max}]$.
With $u_{\max}=0.5$, the nominal force range is $[-10,10]\,\mathrm{N}$.
To make the problem challenging, we deliberately set $u_{\max}$ as small as $0.5$ to increase the control difficulty, while we use a long planning horizon of $16\,\mathrm{s}$ discretized by $100$ zero-order-hold control knots. Population size $N=5000$.

Note that we do not aim for a perfect swing-up solution here; rather, the goal is to demonstrate that under such a challenging setting, CBO exhibits clear advantages over competing methods in terms of convergence and consistency.
In~\Cref{fig:benchmark_dCartpole}, we present benchmark results of loss minimization among CBO, MPPI and CMA-ES (code from~\cite{lange2023evosax}) where CBO performs the best.
CMA-ES experiences large fluctuations of cost during the iteration, while CBO consistently generates good performance.

\begin{figure}[h!]
    \centering
    \includegraphics[width=0.99\linewidth]{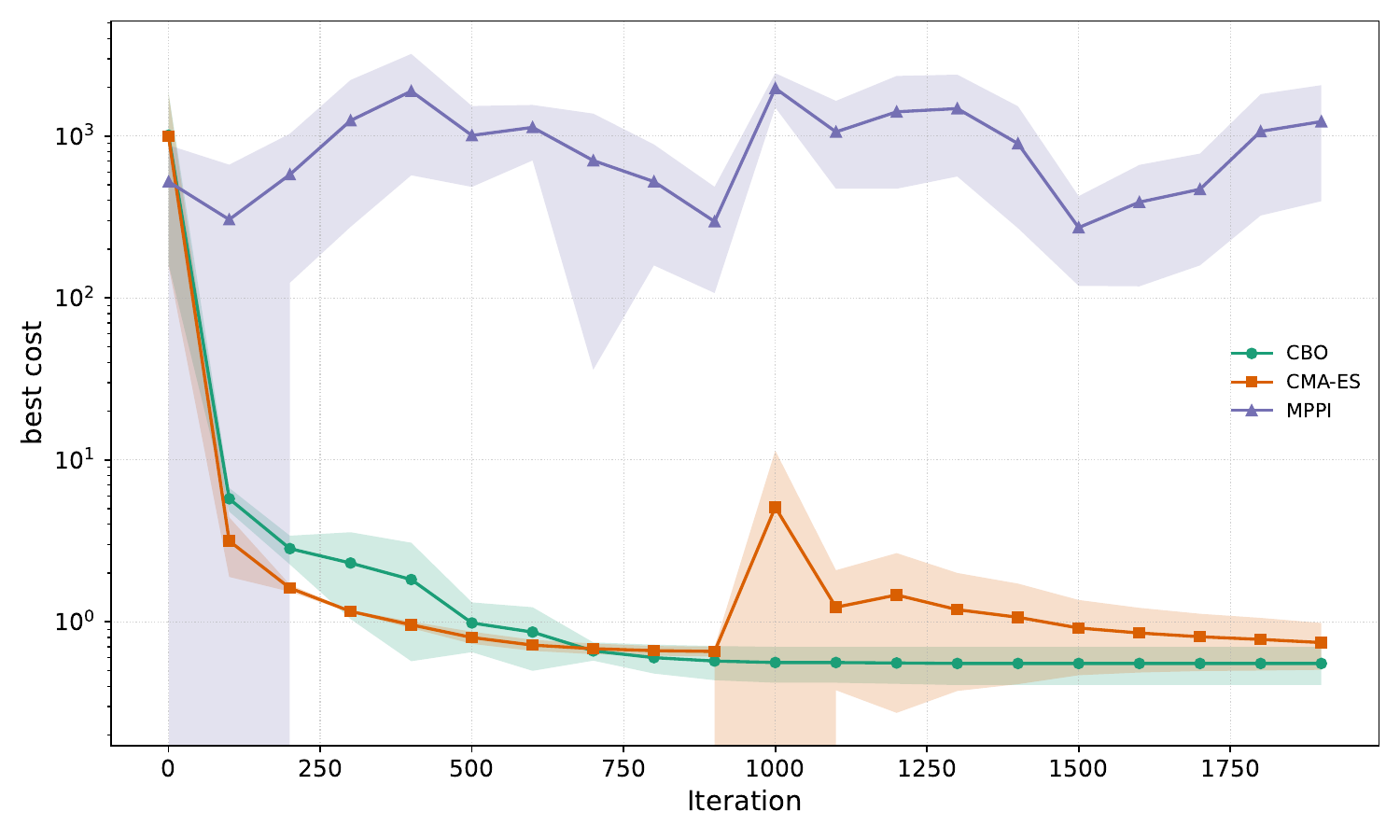}
    \vspace{-7mm}
    \caption{Benchmark double-cartpole: On average, CBO performs the best with relatively small variance. CMA-ES is from implementation~\cite{lange2023evosax}.
    }\label{fig:benchmark_dCartpole}
\end{figure}


\begin{remark}[CBO can be as fast as other zero-order methods]
The bottleneck of the computation time for zero-order methods is the simulation rollout time. Thus the double cartpole experiment, due to its relative simplicity to be simulated, is a good example to compare the running time of the different algorithms.
The statistics
in~\Cref{tab:runtime-comparison} are generated on a NVIDIA GeForce RTX 5090 Laptop GPU
with a 32 $\times$ AMD Ryzen 9-9955HX3D 16-Core processor running a Linux OS
(6.17.0-109014-Tuxedo). Here we use only $N=1000$ particles.
We report \textit{JAX} compilation time and per-iteration computation time, excluding
auxiliary code overhead. The results show no significant difference between the tested
zero-order methods in per-iteration computation time.
\end{remark}
\begin{table}[h!]
    \centering
    \caption{Runtime comparison across algorithms.}
    \vspace{-2mm}
    \label{tab:runtime-comparison}
    \begin{tabular}{lrrr}
        \hline
        Algorithm & Compile (ms) & Mean (ms) & Std (ms) \\
        \hline
        CBO & 6638.30 & 336.43 & 1.02 \\
        CMA-ES & 11369.38 & 351.28 & 12.53 \\
        MPPI & 6548.47 & 372.48 & 35.96 \\
        \hline
    \end{tabular}
\end{table}

\subsection{Humanoid}\label{subsec:exp_g1}
Our last experiment is the $23$ DoF Unitree G1 Humanoid, given $T=6$ zero-order hold knots as PD target input $q_{des}$ in~\eqref{eq:pd_target} with the following parameters:
  \begin{align}
    \tau^\ast &= k_p\,(q_{\text{des}} - q) - k_d\,\dot{q}, \label{eq:pd_target}\\
    k_p &= 500,\qquad
  k_d = 2\,\zeta\,\sqrt{k_p\,I_{\text{eff}}},\qquad
  \zeta = 1,\label{eq:pd_target_kp_kd}
  \end{align}
where $I_{\text{eff}}$ is the effective inertia of the actuator at neutral position.
The decision variable has dimension $T\times n_a = 6\times23$.
We aim to achieve \textbf{demonstration free} locomotion within $2$ seconds: Starting from a standing position, we only provide the terminal base configuration as the goal, without any running cost. 
We use the $se(3)$ difference between the starting base configuration and the goal base configuration at the last step as the cost function.


We present the benchmark results in~\Cref{fig:benchmark_best_loss_humanoid_10k} by comparing the best loss among the population of each iteration. This shows  that CBO significantly outperforms MPPI and CMA-ES (code from~\cite{lange2023evosax}). Interestingly, CBO has a large variance. Some runs are able to generate much lower cost compared to others, though the upper boundary of CBO loss variance still overrates the lower cost variance boundary of the baselines.
  \begin{figure}[h!]
    \centering
    \includegraphics[width=0.99\linewidth]{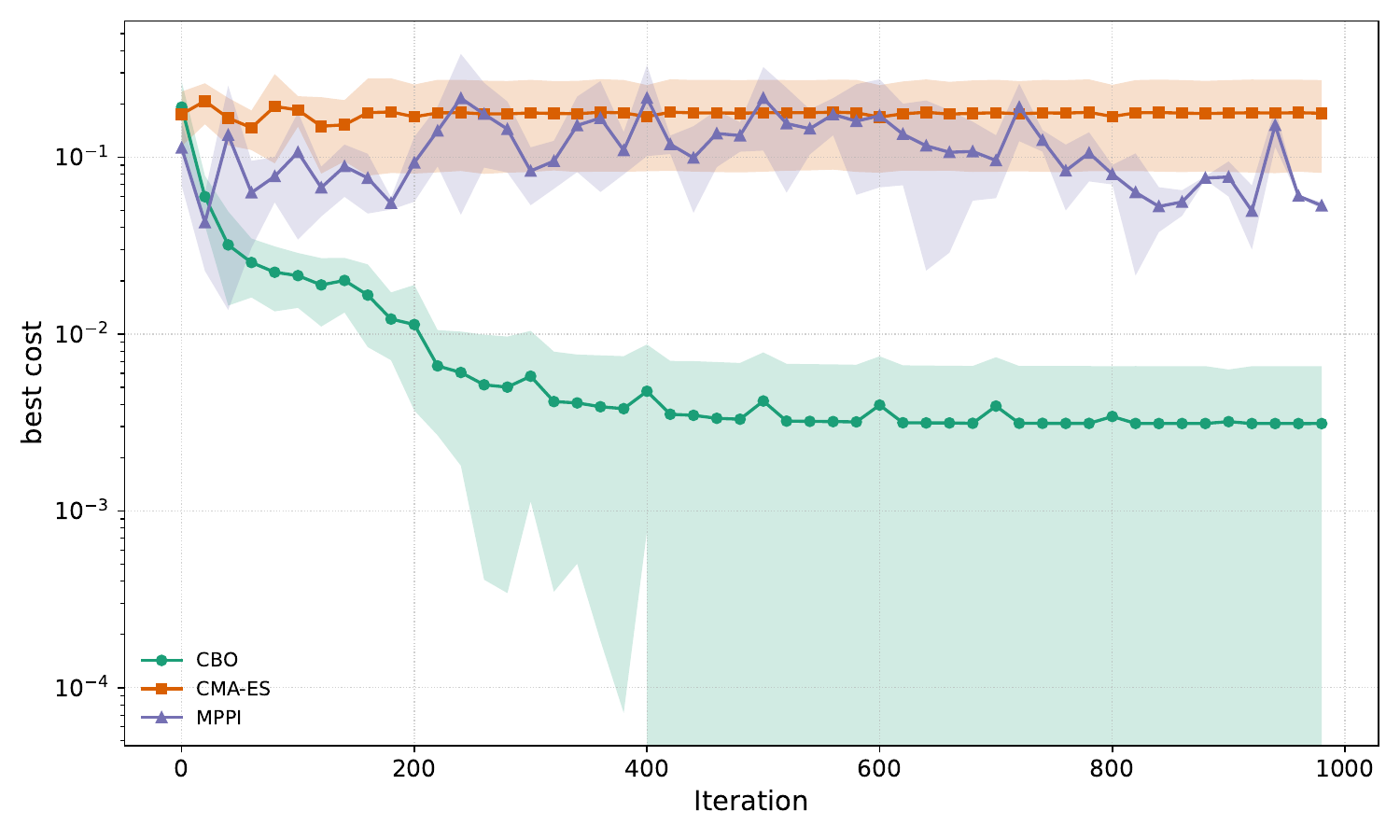}
    \vspace{-7mm}
    \caption{Comparison of the best loss among the population of each iteration for the humanoid demonstration free locomotion experiment. Population size $N=10,000$, CBO wins by large margin. CMA-ES is from implementation~\cite{lange2023evosax}.
    }
    \label{fig:benchmark_best_loss_humanoid_10k}
    \end{figure}

In \Cref{fig:snapshots_g1}, we show snapshots of the starting position, middle position, and end position of the humanoid from a trajectory optimized by CBO.
\begin{figure}[h!]
\centering
\includegraphics[width=0.32\linewidth]{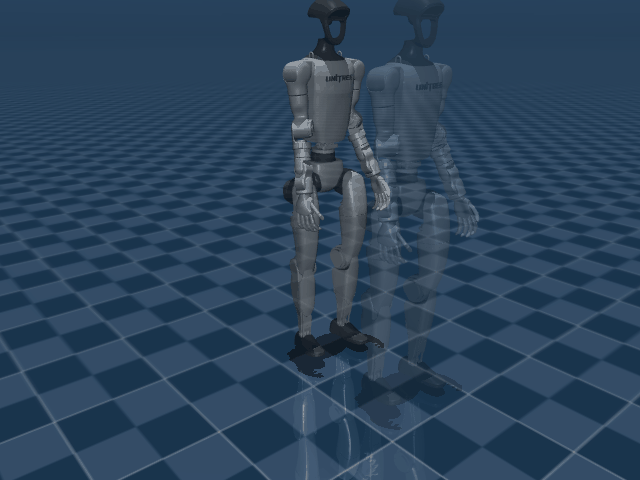}\hfill
\includegraphics[width=0.32\linewidth]{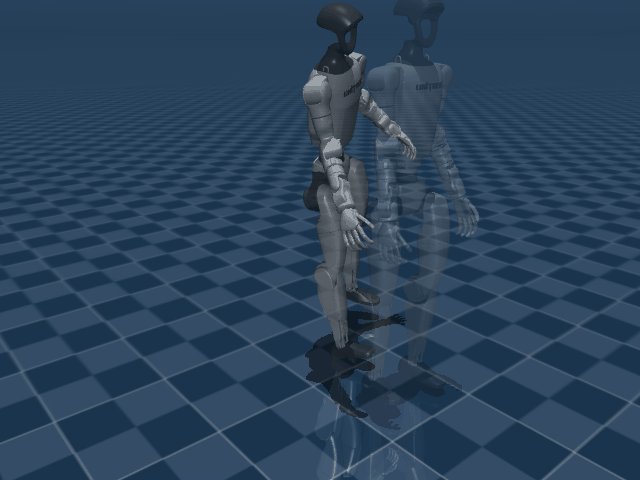}\hfill
\includegraphics[width=0.32\linewidth]{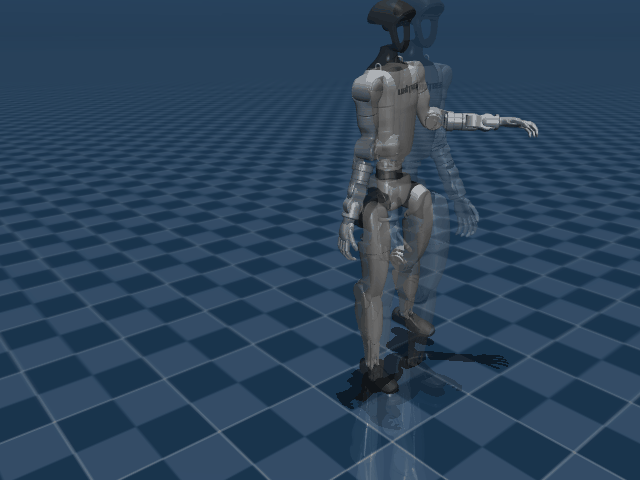}
\caption{CBO optimized trajectory: Three snapshots (start, middle, end) of the G1 humanoid trajectory in the free locomotion task for 2 seconds. The shadowed silhouette indicates the target position for the humanoid.}\label{fig:snapshots_g1}
\end{figure}




\section{Conclusion and future work}\label{sec:conclusion}
We introduced CBO to the robotics community as a step toward global trajectory optimization. By presenting a general mathematical framework for global optimization, we analyzed the limitations of widely used zero-order optimization methods in robotics and explained how CBO addresses these shortcomings. Through three challenging trajectory optimization problems, we demonstrated that CBO can find solutions with significantly lower cost for the tasks considered.
Future work includes exploring the use of CBO for discovering multi-modal behaviors, incorporating constraints, and conducting real-world robotic experiments.

\section{Acknowledgments}
Xudong Sun and Majid Khadiv acknowledge the support of the Huawei-TUM joint laboratory. Massimo Fornasier acknowledges the support of the Munich Center for Machine Learning
and the ERC Advanced Grant NEITALG, grant agreement No. 101198055.

\begin{figure}[h!]
    \centering
    \includegraphics[trim={4cm 15cm 4cm 4cm}, clip,width=0.9\linewidth]{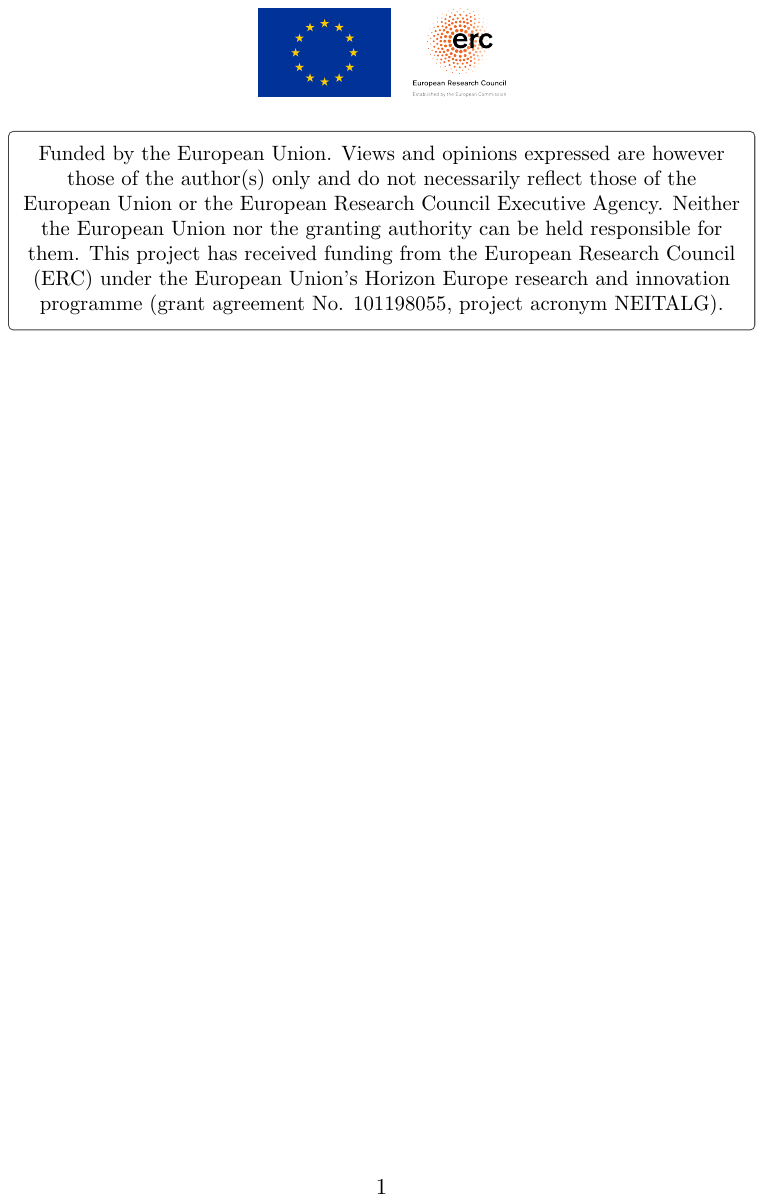}
    \label{fig:placeholder}
\end{figure}


\bibliographystyle{unsrt}
\bibliography{references}

\clearpage
\onecolumn
\appendix
To ensure consistent cross-referencing with the main text, the equation numbering is continued in this appendix.

\subsection{Proof of~\Cref{prop:fisher}.}\label{proof:fisher}

We first prove the following helper lemma. 

\begin{lemma}\label{app:fisher_information_gaussian_mean}
Let $x \in \mathbb{R}^d$ be distributed as
\[
x \sim \mathcal{N}(\mu, \Sigma),
\]
where $\mu \in \mathbb{R}^d$ is the  mean and $\Sigma \in \mathbb{R}^{d \times d}$ is the covariance matrix. Then, 
\[
\mathcal{I}(\mu)= \Sigma^{-1}.
\]
\end{lemma}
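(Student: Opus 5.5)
We compute the Fisher information of the Gaussian family with respect to its mean directly from the definition, treating $\Sigma$ as fixed and known (as in the path integral setting of \eqref{eq:sampling_pro}). Recall that
\[
\mathcal{I}(\mu)=\mathbb{E}_{x\sim\mathcal{N}(\mu,\Sigma)}\bigl[\nabla_\mu \log p(x\mid\mu)\,\nabla_\mu \log p(x\mid\mu)^\top\bigr],
\]
and, under the usual regularity (which holds for the Gaussian density, smooth in $\mu$ with integrable derivatives), equivalently $\mathcal{I}(\mu)=-\mathbb{E}[\nabla_\mu^2\log p(x\mid\mu)]$. We work with the outer-product form, since it needs no interchange of derivative and integral beyond the standard fact that the score has zero mean.

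\textbf{Step 1.} Write the log-density:
\[
\log p(x\mid\mu)=-\tfrac{d}{2}\log(2\pi)-\tfrac12\log\det\Sigma-\tfrac12(x-\mu)^\top\Sigma^{-1}(x-\mu).
\]
Only the quadratic form depends on $\mu$.

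\textbf{Step 2.} Differentiate the quadratic form, using that $\Sigma^{-1}$ is symmetric. This gives the score
\[
\nabla_\mu\log p(x\mid\mu)=\Sigma^{-1}(x-\mu).
\]

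\textbf{Step 3.} Plug the score into the outer-product form:
\[
\mathcal{I}(\mu)=\Sigma^{-1}\,\mathbb{E}\bigl[(x-\mu)(x-\mu)^\top\bigr]\,\Sigma^{-1}=\Sigma^{-1}\Sigma\Sigma^{-1}=\Sigma^{-1}.
\]
The middle expectation is $\Sigma$ by the definition of the covariance. As a sanity check, the Hessian route gives $\nabla_\mu^2\log p=-\Sigma^{-1}$, which is constant in $x$, so its negative expectation is immediately $\Sigma^{-1}$.

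\textbf{Main obstacle.} There is no real obstacle. The only points needing care are:
\begin{itemize}
\item assuming $\Sigma$ is positive definite, so that $\Sigma^{-1}$ exists and the density is well defined;
\item using symmetry of $\Sigma^{-1}$ in the gradient computation.
\end{itemize}
The resulting identity is what later lets the KL constraint \eqref{eq:constraint_natural_gradient_kl_sampling_distribution} be written locally as $\tfrac12\deltaMean^\top\Sigma^{-1}\deltaMean$ in the proof of \Cref{prop:fisher}.
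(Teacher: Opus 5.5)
Your proposal is correct and follows the paper's proof essentially step for step: write the log-likelihood, take the score $\Sigma^{-1}(x-\mu)$, and evaluate the outer-product definition using $\mathbb{E}[(x-\mu)(x-\mu)^\top]=\Sigma$ to get $\Sigma^{-1}\Sigma\Sigma^{-1}=\Sigma^{-1}$. Your Hessian cross-check and the explicit positive-definiteness assumption are harmless additions; also note that for equal covariances the KL in the proof of \Cref{prop:fisher} equals $\tfrac12\deltaMean^\top\Sigma^{-1}\deltaMean$ exactly, not just locally.
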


\begin{proof}
The log-likelihood for a single observation is
\[
\ell(\mu)
= \log p(x \mid \mu)
= -\frac{1}{2}{(x-\mu)}^{\top}\Sigma^{-1}(x-\mu) + C.
\]
The score function is
\[
\nabla_{\mu} \ell(\mu)
= \Sigma^{-1}(x-\mu).
\]
The Fisher Information Matrix is defined as
\[
\mathcal{I}(\mu)
= \mathbb{E}\!\left[
\nabla_{\mu} \ell(\mu)\, \nabla_{\mu} \ell{(\mu)}^{\top}
\right].
\]
Using $\mathbb{E}[(x-\mu){(x-\mu)}^{\top}] = \Sigma$, we obtain
$
\mathcal{I}(\mu)
= \Sigma^{-1}\, \Sigma\, \Sigma^{-1}
= \Sigma^{-1}.
$ 
For $n$ independent and identically distributed samples, the Fisher information becomes 
$\mathcal{I}_n(\mu) = n\, \Sigma^{-1}$.
\end{proof}
\vspace{3em}
We are now ready to prove \Cref{prop:fisher}. 
\vspace{3em}
\begin{proof}
The solution to the constrained optimization problem in~\eqref{eq:obj_natural_gradient_laplace_surrogate_plus_innerproduct_mppi} with constraint~\eqref{eq:constraint_natural_gradient_kl_sampling_distribution} can be derived using Lagrange multipliers, leading to an update rule for $\controltrajMean$ that incorporates both the gradient of the surrogate cost and the KL divergence constraint, i.e., 
\begin{align}\label{eq:lagrange}
    \mathcal{L}(\deltaMean):=\laplaceSurrogateStateControlPathCostNext +\gamma\Big(KL(\measureU(u\mid \controltrajMean+\deltaMean) |\measureU(u\mid\controltrajMean))-\beta\Big).
\end{align}

When $\measureU(u\mid \controltrajMean)$ is Gaussian, the covariance matrix remains the same, we have
\begin{align}
&KL(\measureU(u\mid \controltrajMean+\deltaMean) |\measureU(u\mid\controltrajMean))=\frac{1}{2}\|\deltaMean\|^2_{\mathcal{I}_{\controltrajMean}}= \beta,\label{eq:kl_gaussian_mean}
\end{align}
where $\mathcal{I}_{\controltrajMean}=\Sigma^{-1}$ is the Fisher information matrix of $\measureU(\controltraj)$ with respect to its mean (see~\Cref{app:fisher_information_gaussian_mean}).

This result can be derived from the closed-form expression of the KL-divergence between two multivariate Gaussian distributions below with $\Sigma_0=\Sigma_1=\Sigma$:
\begin{align}
\mathrm{KL}\!\left(
\mathcal{N}(\mu_0,\Sigma_0)
\;\|\;
\mathcal{N}(\mu_1,\Sigma_1)
\right)
=
\frac{1}{2}
\left[
\log\frac{|\Sigma_1|}{|\Sigma_0|}
- \dim(\mu)
+ \mathrm{tr}\!\left(\Sigma_1^{-1}\Sigma_0\right)
+ {(\mu_1-\mu_0)}^\top \Sigma_1^{-1} (\mu_1-\mu_0)
\right]
\end{align}

With~\eqref{eq:kl_gaussian_mean}, the gradient of the constraint is
$  \mathcal{I}_{\controltrajMean}\deltaMean$,
leading to the following stationary condition of the Lagrangian:
\begin{equation}\label{eq:stationary}
\nabla_{\deltaMean}\laplaceSurrogateStateControlPathCostNext \big|_{\deltaMean=\deltaMean^*} +\gamma \Sigma^{-1}\deltaMean^{*}=0,
\end{equation}
where we use superscript $*$ to denote the optimal solution.

Below, we compute the first term in the above equation. Recall that 
\begin{equation}
\xi(J\mid \measureU ) = -\frac{1}{\rho}\log \left(\int e^{-\rho J(\controltraj|x_0)} d\measureU(\controltraj)\right)=-\frac{1}{\rho}\log \Big(\mathbb{E}_{u\sim \measureU}[\exp\left(-\rho J(u|x_0)\right)]\Big).
\end{equation}
Therefore, we get 
\begin{align}\label{eq:grad_l}
    \nabla_{\Delta(\controltrajMean)}\laplaceSurrogateStateControlPathCostNext =-\frac{1}{\rho}
\frac{\nabla_{\Delta(\controltrajMean)} 
\mathbb{E}_{u\sim \mathbb{P}}[\exp\left(-\rho J(u|x_0)\right)]
}{\mathbb{E}_{u\sim \mathbb{P}}[\exp\left(-\rho J(u|x_0)\right)]},
\end{align}
where $\mathbb P=\measureU(\controltrajMean+\Delta(\controltrajMean))$. 

Recall that we use $\measureU (\cdot \mid \controltrajMean)$ to represent $\mathcal{N}(\controltrajMean, \Sigma)$ and thus $\mathbb P$ represents $\mathcal{N}(\controltrajMean+\Delta(\controltrajMean), \Sigma)$. We can compute the gradient in the nominator of~\eqref{eq:grad_l} using the following result. 
\begin{align}
\nabla_\theta \mathbb{E}_{x \sim p_\theta}[f(x)]
&= \nabla_\theta \int f(x)\, p_\theta(x)\, dx = \int f(x)\, p_\theta(x)\, \nabla_\theta \log p_\theta(x)\, dx \\
&= \mathbb{E}_{x \sim p_\theta}
\!\left[ f(x)\, \nabla_\theta \log p_\theta(x) \right].
\end{align}
Using the above equation and the fact that 
\begin{equation}
  \nabla_{\Delta(\controltrajMean)}\log d {\measureP}=\Sigma^{-1}(u-\controltrajMean-\Delta(\controltrajMean)),
\end{equation}
equation \eqref{eq:grad_l} becomes
\begin{align}
\nabla_{\deltaMean}\laplaceSurrogateStateControlPathCostNext=&\frac{\mathbb{E}_{u\sim \mathbb{P}}
  [\exp\left(-\rho J(u\mid x_0)\right)
\Sigma^{-1}(u-\controltrajMean-\Delta(\controltrajMean))
]}{\mathbb{E}_{u\sim \mathbb{P}}[\exp\left(-\rho J(u|x_0)\right)]}\label{eq:grad_laplace_surrogate_change_of_variable}\\
=&\Sigma^{-1}\mathbb{E}_{u\sim \measureU}\Big[\frac{
  \exp\left(-\rho J(u+\Delta(\controltrajMean)\mid x_0)\right)
(u-\controltrajMean)
}{\mathbb{E}_{u'\sim \measureU}[\exp\left(-\rho J(u'+\Delta(\controltrajMean)|x_0)\right)]}\Big]\label{eq:change_of_variable}\\
=&\Sigma^{-1}\mathbb{E}_{u\sim \measureU }[w_{\controltrajMean, u}(u-\controltrajMean)], \label{eq:grad_laplace_surrogate_final}
\end{align}
The equality in~\eqref{eq:change_of_variable} is due to change of variable, $u\leftarrow u-\Delta(\controltrajMean)$. 

Note that we use $\measureU (\cdot \mid \controltrajMean)$ to denote $\mathcal{N}(\controltrajMean, \Sigma)$ and in~\eqref{eq:grad_laplace_surrogate_final},
\begin{equation}
w_{\controltrajMean, u}=\frac{
  \exp\left(-\rho J(u+\Delta(\controltrajMean)\mid x_0)\right)
}{\mathbb{E}_{u'\sim \measureU}[\exp\left(-\rho J(u'+\Delta(\controltrajMean)|x_0)\right)]}\approx \frac{
  \exp\left(-\rho J(u)\mid x_0)\right)
}{\mathbb{E}_{u'\sim \measureU}[\exp\left(-\rho J(u')|x_0)\right)]}.\label{eq:appr_jalal}
\end{equation}
The above approximation is reasonable due to the KL-divergence constraint in~\eqref{eq:kl_gaussian_mean} with a small enough $\beta$, i.e.~the shift between the two distributions being constrained to be minimal. 

Putting together~\eqref{eq:stationary} and \eqref{eq:grad_laplace_surrogate_final} results in
\begin{align}
    \deltaMean^{*}&=-\frac{1}{\gamma}\mathbb{E}_{u\sim \measureU }[w_{\controltrajMean, u}(u-\controltrajMean)].
\end{align}


Thus following natural gradient with rate $\gamma$, i.e., \eqref{eq:grad_fisher_kl_laplace}, we obtain the following update rule:
\begin{align}
\controltrajMean^{(r+1)}&= \controltrajMean^{(r)}-\gamma\deltaMean^* \\
                        &=\controltrajMean^{(r)}
                        +\mathbb{E}_{u\sim \measureU}w_{\controltrajMean, u}(u-\controltrajMean)\\
                        &=\mathbb{E}_{u\sim \measureU}w_{\controltrajMean, u}u.
\end{align}
The last equality comes from the fact that $\mathbb{E}_{u\sim \measureU}[w_{\controltrajMean, u}\controltrajMean]=\controltrajMean$

\end{proof}






\subsection{Proof of~\Cref{prop_cbo}.}\label{proof:cbo}
To prove the conclusion, we provide the following lemmas based on reformulations of what was established in
~\cite{fornasier2024consensus}.
\vspace{1cm}
\begin{lemma}[Exponential decay of Lyapunv function in mean  dynamic]\label{lemma:lyapunov_exp_decay}
Define $\mathcal{V}=\int{\frac{1}{2}\|\controltraj-\controltraj^{*}\|_2^2}d\measureU(\controltraj)$ (Lyapunov function).
 When~\Crefrange{eq:lipschitz}{eq:cbo_ball} holds, and $\controltraj^{*}$ lives in the support of $\measureU^{(0)}$, given $0<\vartheta<1$,
$\exists\rho(\vartheta)$ in~\Cref{eq:PI_weight} which is large enough, such that  
\begin{equation}
  \mathcal{V}(\measureU^{(r)})\le \mathcal{V}(\measureU^{(0)})\exp\left(-r(1-\vartheta)(2\lambda-T\times n_a \sigma^2)\right) 
\end{equation}
under the mean field dynamic (see Equation(7-8) in~\cite{fornasier2024consensus}).
\end{lemma}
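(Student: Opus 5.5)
The plan is to follow the mean-field Lyapunov argument of~\cite{fornasier2024consensus}. Write $d=T\times n_a$ and let $\measureU^{(r)}$ be the law of the mean-field (McKean) process associated with~\eqref{eq:cbo_dyn}. In that process the empirical consensus point~\eqref{eq:cbo_mean_particle_dirac} is replaced by
\[
\cbomean_\rho(\measureU^{(r)})=\frac{\int u\,e^{-\rho J(u\mid x_0)}\,d\measureU^{(r)}(u)}{\int e^{-\rho J(u\mid x_0)}\,d\measureU^{(r)}(u)}.
\]

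\textbf{Step 1: differential inequality for $\mathcal{V}$.} Apply It\^o's formula to $\frac12\|u-\controltraj^*\|^2$ and take expectations. The drift contributes $-\lambda\int\langle u-\controltraj^*,u-\cbomean_\rho\rangle\,d\measureU$. The isotropic diffusion $\sigma|u-\cbomean_\rho|\,dW$ contributes $\frac{d\sigma^2}{2}\int\|u-\cbomean_\rho\|^2\,d\measureU$. Writing $u-\cbomean_\rho=(u-\controltraj^*)+(\controltraj^*-\cbomean_\rho)$ and using Cauchy--Schwarz and Jensen gives
\[
\frac{d}{dr}\mathcal{V}\le-(2\lambda-d\sigma^2)\mathcal{V}+\sqrt2(\lambda+d\sigma^2)\sqrt{\mathcal{V}}\,\|\cbomean_\rho-\controltraj^*\|+\frac{d\sigma^2}{2}\|\cbomean_\rho-\controltraj^*\|^2 .
\]
So it suffices to show $\|\cbomean_\rho(\measureU^{(r)})-\controltraj^*\|\le c(\vartheta)\sqrt{\mathcal{V}(\measureU^{(r)})}$, with $c(\vartheta)$ small enough that the last two terms are absorbed into $\vartheta(2\lambda-d\sigma^2)\mathcal{V}$. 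Gr\"onwall then gives the stated bound.

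\textbf{Step 2: quantitative Laplace principle.} To bound $\|\cbomean_\rho-\controltraj^*\|$, split the integral defining $\cbomean_\rho$ into two regions: the sublevel neighbourhood $\{J<q\}\cap B_{\kappa'}(\controltraj^*)$ and its complement. On the first region, the inverse continuity~\eqref{eq:assumption_inverse_continuity} (with $J(\controltraj^*)=0$) gives $\|u-\controltraj^*\|\le q/\eta$. On the complement, the Gibbs weights are at most $e^{-\rho q}$ relative to the mass near the minimizer. The upper Lipschitz bound~\eqref{eq:lipschitz} shows that $J\le L\,r_0$ on $B_{r_0}(\controltraj^*)$. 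Together these give
\[
\|\cbomean_\rho-\controltraj^*\|\le\frac{q+L r_0}{\eta}+\frac{e^{-\rho q}}{\measureU(B_{r_0}(\controltraj^*))}\int\|u-\controltraj^*\|\,d\measureU .
\]
Choose $q$ and $r_0$ proportional to $c(\vartheta)\sqrt{\mathcal{V}_\mathrm{target}}$, where $\mathcal{V}_\mathrm{target}$ is the accuracy level to be reached. Then choose $\rho$ large, so that $e^{-\rho q}/\measureU(B_{r_0})$ is small. This is the step where "$\rho(\vartheta)$ large enough" enters.

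\textbf{Step 3: lower bound on the mass near $\controltraj^*$ (main obstacle).} Step 2 requires $\measureU^{(r)}(B_{r_0}(\controltraj^*))$ to stay bounded below uniformly on the relevant time interval. At $r=0$ this holds because $\controltraj^*$ lies in the support of $\measureU^{(0)}$. I would propagate it by testing the Fokker--Planck equation against a smooth mollifier supported in $B_{r_0}(\controltraj^*)$. The diffusion is nondegenerate wherever $u\neq\cbomean_\rho$, and $\cbomean_\rho$ stays bounded because of Step 2 and the a priori moment bound on $\mathcal{V}$. This yields a bound of the form $\measureU^{(r)}(B_{r_0})\ge \measureU^{(0)}(\text{mollifier})\,e^{-pr}$. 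Finally, close the argument by continuity. Let $r_1$ be the first time at which either $\mathcal{V}$ reaches $\mathcal{V}_\mathrm{target}$ or the inequality $\|\cbomean_\rho-\controltraj^*\|\le c(\vartheta)\sqrt{\mathcal{V}}$ fails. On $[0,r_1]$ the exponential decay holds, so $r_1$ is at most a time of order $1/((1-\vartheta)(2\lambda-d\sigma^2))$. On that finite horizon the mass lower bound is uniform, so $\rho$ can be fixed a priori. This shows the second alternative cannot occur first, which gives the lemma. I expect the delicate part to be keeping the constants in this mass estimate uniform while $\rho$ is fixed before the time horizon.
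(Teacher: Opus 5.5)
Your outline is the proof of Theorem~3.7 in \cite{fornasier2024consensus}. That is all the paper offers here, since its proof is a one-line citation of that theorem. The outline has four parts: the differential inequality for $\mathcal{V}$, a quantitative Laplace principle for the consensus point, a mollifier lower bound on $\measureU^{(r)}(B_{r_0}(u^*))$, and a continuation argument in time. Step~1 is correct as written.

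\textbf{The gap is in Step~2.} The hypotheses of the lemma, \eqref{eq:lipschitz}--\eqref{eq:cbo_ball}, are purely local on $B_\kappa(u^*)$. You claim that outside $\{J<q\}\cap B_{\kappa'}(u^*)$ the Gibbs weights are at most $e^{-\rho q}$ relative to the mass near $u^*$. This fails on $\{J<q\}\setminus B_{\kappa'}(u^*)$. Nothing in the local assumptions prevents $J$ from approaching $J(u^*)=0$ far from $u^*$. For instance, $J\to 0$ fast enough along one direction at infinity, with $u^*$ still the unique minimizer. Then $e^{-\rho J}\,d\measureU$ can put most of its mass far away, and $\|\bar u_\rho-u^*\|$ need not become small as $\rho\to\infty$.

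What you are missing is the second half of the inverse-continuity hypothesis in \cite{fornasier2024consensus}: $J_\infty:=\inf_{u\notin B_\kappa(u^*)}J(u\mid x_0)>0$. The paper's restatement drops it too, but the cited theorem assumes it. With it, the repair is:
\begin{itemize}
\item split at the level $q+Lr_0$ rather than $q$, because the normaliser is only bounded below by $e^{-\rho L r_0}\measureU(B_{r_0}(u^*))$;
\item impose $q+Lr_0\le J_\infty$, so that $\{J<q+Lr_0\}\subset B_\kappa(u^*)$.
\end{itemize}
Your displayed bound then follows.

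\textbf{What the continuation argument actually yields.} Set $d=T\times n_a$ and $r^*=\log\bigl(\mathcal{V}(\measureU^{(0)})/\mathcal{V}_{\mathrm{target}}\bigr)/\bigl((1-\vartheta)(2\lambda-d\sigma^2)\bigr)$. Your $q$ and $r_0$ are tied to $\sqrt{\mathcal{V}_{\mathrm{target}}}$, and $\rho$ must beat the factor $e^{-pr^*}$. Three consequences follow:
\begin{itemize}
\item the logarithm in $r^*$ matters, $p$ must not depend on $\rho$, and $2\lambda>d\sigma^2$ must be assumed;
\item the threshold for $\rho$ depends on $\mathcal{V}_{\mathrm{target}}$, $\mathcal{V}(\measureU^{(0)})$, the mass of $\measureU^{(0)}$ near $u^*$, and $\lambda,\sigma,d$, not on $\vartheta$ alone;
\item the exponential bound holds only up to the first time $\mathcal{V}$ reaches $\mathcal{V}_{\mathrm{target}}$.
\end{itemize}
That is exactly the content of Theorem~3.7 in \cite{fornasier2024consensus}, and no more can be expected. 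For fixed finite $\rho$, the mean-field flow typically collapses onto a consensus point different from $u^*$, so $\mathcal{V}$ does not tend to $0$. Replace your closing ``which gives the lemma'' with the lemma stated with this accuracy level and time horizon made explicit.
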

\begin{proof}
 This is a restatement of~\textit{Theorem 3.7} of~\cite{fornasier2024consensus},    
\end{proof}
\vspace{1cm}
\begin{lemma}[Convergence in probability: Wasserstein distance to global optimizer]\label{lem:global_opt_cbo}
When~\Crefrange{eq:lipschitz}{eq:cbo_ball} hold, given $0<\vartheta<1$,
$\exists\rho(\vartheta)$ in~\eqref{eq:PI_weight} which is large enough and iteration $r>r^{*}(\underline{\mathcal{V}})>0$,
\begin{align}
&\Pr\left(
  \|\frac{1}{N}\sum_1^N \controltraj^{(r,i)}- \controltraj^{*}\|_2^2 \le \epsilon_e
\right) \ge 1-\delta\left(\epsilon_e, N, \Delta r, n_a \times T, \underline{\mathcal{V}})\right)\label{eq:prob_particles_dist2global_optimizer}
\end{align}
where
\begin{align}
r^{*}&=\frac{1}{1-\vartheta} \frac{1}{2\lambda-n_a\times T \sigma^2}\log\frac{\mathcal{V}(\measureU^{(0)})}{\underline{\mathcal{V}}} \label{eq:iteration_steps_decay_selection1}
\end{align}
and 
\begin{align}
\mathcal{V}(\measureU)=\frac{1}{2}\int{\|\measureU-\measureU^{*}\|}^2 d\measureU(\controltraj),\quad 
0<\underline{\mathcal{V}}<\mathcal{V}(\measureU^{(0)}).
\end{align}

In~\eqref{eq:prob_particles_dist2global_optimizer}, $\delta\left(\epsilon_e, N, \Delta r, n_a \times T, \underline{\mathcal{V}})\right)$ decreases with larger population size $N$ and smaller $\Delta r$ (Euler-Maruyama time interval in~\eqref{eq:cbo_dyn_discrete}) and smaller $\underline{V}$ (corresponding to more Euler-Maruyama integration steps in~\eqref{eq:cbo_dyn_discrete}).

\end{lemma}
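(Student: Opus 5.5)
We would prove \Cref{lem:global_opt_cbo} the way Theorem 3.8 of~\cite{fornasier2024consensus} is proved. The error between the empirical mean of the particles and $\controltraj^{*}$ splits into two parts. The first is a deterministic mean-field part, controlled by \Cref{lemma:lyapunov_exp_decay}. The second is a stochastic part that collects the mean-field approximation error in $N$ and the Euler--Maruyama discretization error in $\Delta r$.

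Concretely, let $\bar u^{(r)}_N=\frac1N\sum_i \controltraj^{(r,i)}$, the empirical mean. Let $\bar u^{(r)}_{MF}=\int \controltraj\, d\measureU^{(r)}$ be the mean of the mean-field law at the same time. We split
\begin{equation*}
\|\bar u^{(r)}_N-\controltraj^{*}\|_2^2\le 2\|\bar u^{(r)}_N-\bar u^{(r)}_{MF}\|_2^2+2\|\bar u^{(r)}_{MF}-\controltraj^{*}\|_2^2 .
\end{equation*}

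\textbf{Mean-field term.} By Jensen's inequality, $\|\bar u^{(r)}_{MF}-\controltraj^{*}\|_2^2\le 2\mathcal{V}(\measureU^{(r)})$. \Cref{lemma:lyapunov_exp_decay} gives $\mathcal{V}(\measureU^{(r)})\le \mathcal{V}(\measureU^{(0)})\exp(-r(1-\vartheta)(2\lambda-n_aT\sigma^2))$. This requires the inverse-continuity and Lipschitz conditions \Crefrange{eq:lipschitz}{eq:cbo_ball}, and $\rho=\rho(\vartheta)$ large enough. The right-hand side is at most $\underline{\mathcal{V}}$ exactly when $r\ge r^{*}$, with $r^{*}$ as in \eqref{eq:iteration_steps_decay_selection1}; this is just solving the exponential for $r$. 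One subtlety must be checked here. Theorem 3.7 of~\cite{fornasier2024consensus} only gives decay up to the first time $\mathcal{V}$ reaches $\underline{\mathcal{V}}$. After that time we would appeal to the same theorem's conclusion that $\mathcal{V}$ stays small, or we would simply state the bound at the hitting time.

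\textbf{Fluctuation term.} Following~\cite{fornasier2024consensus}, we bound the interacting particle system (with time step $\Delta r$) against $N$ i.i.d.\ copies of the mean-field McKean--Vlasov process driven by the same Brownian increments, using a synchronous coupling. Three error sources enter.
\begin{itemize}
\item The standard strong error of Euler--Maruyama, of order $\Delta r$ in mean square.
\item The mean-field error, $\mathbb{E}\|\bar u_N-\bar u_{MF}\|^2\le C_{MFA}N^{-1}$. This is Theorem 3.8 of~\cite{fornasier2024consensus}, and it holds on the event that the empirical moments stay bounded.
\item The law-of-large-numbers error of the i.i.d.\ copies, of order $N^{-1}$ times a second moment that depends on the dimension $n_a T$.
\end{itemize}
We then apply Markov's inequality at level $\epsilon_e-4\underline{\mathcal{V}}$; we assume $\epsilon_e>4\underline{\mathcal{V}}$, which amounts to choosing $\underline{\mathcal{V}}$ small. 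Adding the probability of the bad event where the moment bound fails gives
\begin{equation*}
\delta=\frac{C(n_aT,\underline{\mathcal{V}})(N^{-1}+\Delta r)}{\epsilon_e-4\underline{\mathcal{V}}}+\Pr(\text{moment bound fails}).
\end{equation*}
This $\delta$ has the claimed monotonicity in $N$ and $\Delta r$.

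\textbf{Main obstacle.} The hard step is the uniform mean-field estimate. The consensus point is a nonlinear, softmax-weighted functional, and its stability in $\rho$ degrades like $e^{\rho(\cdot)}$. The constants can therefore blow up with $\rho$ and with the horizon $r^{*}$, and a priori moment bounds are needed to tame them. We would not reprove this. We would import Theorem 3.8 and Remark 3.9 of~\cite{fornasier2024consensus} as a black box, verify that our cost satisfies their hypotheses (which coincide with \Crefrange{eq:lipschitz}{eq:cbo_ball}), and note that all of their constants depend on $n_aT$, $\rho$ and $\underline{\mathcal{V}}$, which is the parameter dependence recorded in $\delta$. The stated dependence of $\delta$ on $\underline{\mathcal{V}}$ then follows from the trade-off above: a smaller $\underline{\mathcal{V}}$ forces a larger $r^{*}$ and hence more integration steps.
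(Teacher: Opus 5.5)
Your proposal is sound at the paper's level of rigor and follows essentially the same route: the paper's proof just declares the lemma a reformulation of Theorem 3.8 of \cite{fornasier2024consensus}, and your split into Euler--Maruyama error, mean-field (particles versus i.i.d.\ copies) error and the optimization error from \Cref{lemma:lyapunov_exp_decay}, closed by Markov's inequality on a bounded-moment event, is the internal structure of that theorem, whose hard mean-field estimate you also import as a black box. Your caveat that the Lyapunov decay is only guaranteed until $\mathcal{V}$ first reaches $\underline{\mathcal{V}}$ (so ``for all $r>r^{*}$'' needs care) is a fair refinement the paper glosses over; note, however, that the claimed decrease of $\delta$ as $\underline{\mathcal{V}}\to 0$ does not follow from your own bound, because your constant $C(n_aT,\underline{\mathcal{V}})$ grows with $r^{*}$ and $\rho$, so that monotonicity is inherited from the statement rather than proved by either argument.
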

\begin{proof}
This statement is a reformulation of Theorem 3.8 (convergence of CBO under finite sample size approximation to the Fokker-Planck equation) in~\citep{fornasier2024consensus}.
\end{proof}

\vspace{1cm}
Now we present the proof of~\Cref{prop_cbo}.
\vspace{1cm}
\begin{proof}
To simplify notations, we use $\measureU^{(r^{*})}(\controltraj)$ to replace
$\measureU^{(r^{*})}(\controltraj\mid \controltrajMean^{(r^{*})})$.

We first prove the mean-field approximation~\cite{fornasier2024consensus} case:

According to~\Cref{lemma:lyapunov_exp_decay},
under assumption~\Crefrange{eq:lipschitz}{eq:cbo_ball}, the diffusion dynamic in~\eqref{eq:cbo_dyn} leads to descent of $\mathcal{V}$ exponentially.

Choose the first phase duration $r_1$ sufficiently large, such that $\mathcal{V}$ is small enough after the particles enter the neighborhood $B_{\kappa}(\controltraj^{*})$ of the global optima $\controltraj^{*}$.

In the second phase, the particles are close enough to the global optima, i.e., they are within the neighborhood $B_{\kappa}(\controltraj^{*})$. 
 Once inside this neighborhood, define $L=\sup_{B_{\kappa}(\controltraj^{*})} L(\controltraj)$, which results in
  \begin{align}
    \xi(J\mid \measureU(\controltraj))&=\int{\left(J(\controltraj\mid x_0)-J(\controltraj^{*}\mid x_0)\right)}d\measureU(\controltraj)~(\text{assume}~J(\controltraj^{*}\mid x_0)=0)\label{eq:}\\
                                      &\le \int{L}{\|\controltraj-\controltraj^{*}\|_2}d\measureU(\controltraj)\label{eq:coercivity}~(\text{Lipschitz})\\
                                      &\le L\sqrt{\int{\|\controltraj-\controltraj^{*}\|_2^2}d\measureU(\controltraj)}~(\text{Cauchy-Schwarz inequality})\label{eq}\\
                                      &=L \sqrt{2\mathcal{V}(\measureU)}\label{eq:loss_integral_lt_lyapunov_any_iter}
\end{align}
Since $\mathcal{V}(\measureU)$ continues to decrease exponentially under the diffusion dynamic in~\eqref{eq:cbo_dyn} according to~\Cref{lemma:lyapunov_exp_decay}, $\xi(J\mid \measureU(\controltraj))$ also decreases accordingly. 
Thus, we have
 \begin{align}
  &\int J(\controltraj|x_0)d\measureU^{(r^{*})}(\controltraj)\label{eq:loss_integral}\\
  \le&L \sqrt{2\mathcal{V}(\measureU^{(r^*)})}~(\eqref{eq:loss_integral_lt_lyapunov_any_iter})\label{eq:loss_integral_lt_lyapunov}\\
  \le& L\sqrt{2\mathcal{V}(\measureU^{(0)})\exp\left(-r^{*}(1-\vartheta)(2\lambda-T\times n_a \sigma^2)\right)}~(\text{exponential decay of}~\mathcal{V}(\measureU),\Cref{lemma:lyapunov_exp_decay})\label{eq:lyapunov_exp_decay}\\
  =&L\sqrt{2\exp\left(-r^{*}(1-\vartheta)(2\lambda-T\times n_a \sigma^2)\right)}\sqrt{\int{\|\controltraj-\controltraj^{*}\|_2^2}d\measureU^{(0)}(\controltraj)}\\
  \le&L\frac{\sqrt{2\exp\left(-r^{*}(1-\vartheta)(2\lambda-T\times n_a \sigma^2)\right)}}{\eta}\sqrt{\int{\left(J(u)-J(u^{*})\right)^2}d\measureU^{(0)}(\controltraj)}~(\text{inverse continuity in}~\eqref{eq:assumption_inverse_continuity})\\
  =&L\frac{\sqrt{2\exp\left(-r^{*}(1-\vartheta)(2\lambda-T\times n_a \sigma^2)\right)}}{\eta}\sqrt{\int{\left(J(u)\right)^2}d\measureU^{(0)}(\controltraj)}~(\text{using}~J(u^{*})=0)\\
  =&\frac{L\sqrt{2\exp\left(-r^{*}(1-\vartheta)(2\lambda-T\times n_a \sigma^2)\right)}}{\eta}\zeta(J, \measureU^{(0)})\int J(\controltraj|x_0)d\measureU^{(0)}(\controltraj)
  \end{align}
where in the last equation we define $\zeta(J, \measureU^{(0)})$,~s.t.
\begin{align}
   \zeta(J, \measureU^{(0)})\int J(\controltraj|x_0)d\measureU^{(0)}(\controltraj) 
   &=\sqrt{\int{{\left(J(\controltraj\mid x_0)\right)}^2}d\measureU^{(0)}(\controltraj)}
\end{align}

Due to the exponential  in~\eqref{eq:lyapunov_exp_decay}, we can always let the iteration persist until we find a big enough $r^{*}$ such that $\frac{L\sqrt{2\exp\left(-r^{*}(1-\vartheta)(2\lambda-T\times n_a \sigma^2)\right)}}{\eta}\zeta(J, \measureU^{(0)})<1$,  such that
\begin{align}
 \int J(\controltraj|x_0)d\measureU^{(r^{*})}(\controltraj)   < &\int J(\controltraj|x_0)d\measureU^{(0)}(\controltraj)\label{eq:j_integral_decay}    
\end{align}
  
In the finite sample approximation, a similar argument holds according to~\Cref{lem:global_opt_cbo}:  Choose $\epsilon_e$ small enough in ~\eqref{eq:prob_particles_dist2global_optimizer} such that with high probability (at the cost of large population size $N$, smaller $\Delta r$ and long iterations), all particles reside in $B_{\kappa}(u^{*})$. Then~\eqref{eq:j_integral_decay} also holds following the same arguments as in the mean-field  (via choosing a even smaller $\epsilon_e$).
The target consensus point can be regarded as a convex combination of particles (see~\eqref{eq:cbo_mean_particle_dirac}): when all particles converge to the global optimizer, the convex combination converges as well. 

\end{proof}

\begin{remark}
The conclusion in \Cref{prop_cbo} is essential, as it shows that, compared to the other zero-order optimization methods considered in this paper, CBO generates a population with significantly lower average cost. While alternative methods typically produce a large number of poor-quality samples alongside a single high-performing solution, CBO yields a population whose members are, on average, well behaved and consistently low cost.
\end{remark}

\subsection{Details about the illustrative objective function in~\Cref{fig:finite_locals_unique_global}}\label{app:subsec:himmelblau}
\paragraph{Objective function.}

We consider the following non-convex function:
\begin{equation}
\label{eq:himmelblau_objective}
f(x,y)=
  \underbrace{\bigl(x^2 + y - 11\bigr)^2 + \bigl(x + y^2 - 7\bigr)^2}_{h(x,y)}
  +\alpha\bigl((x-3)^2 + (y-2)^2\bigr).
\end{equation}
where $\alpha>0$ is a small constant.

The first term $h(x,y)$ is the classical Himmelblau function, which is non-convex and admits a finite number of isolated minimizers. The second term is a quadratic penalty centered at $(3,2)$ that is zero at this point and strictly positive elsewhere.

\paragraph{Global minimizer.}
Since $h(x,y)\ge 0$ for all $(x,y)$ and
\[
h(3,2)=0,
\]
it follows that
\[
f(3,2)=0,
\]
and therefore $(3,2)$ is a global minimizer of $f$. Moreover, for any $(x,y)\neq(3,2)$, the penalty term satisfies
\[
\alpha\bigl((x-3)^2 + (y-2)^2\bigr) > 0,
\]
which implies $f(x,y)>0$. Hence, $(3,2)$ is the \emph{unique} global minimizer of $f$.

\paragraph{Local minimizers.}
The Himmelblau function $h(x,y)$ has four isolated minimizers located at $(3,2)$, $(-2.805118,\,3.131312)$, $(-3.779310,\,-3.283186)$, and $(3.584428,\,-1.848126)$, all of which achieve the same minimal value of zero for $h$.

After adding the quadratic penalty, the point $(3,2)$ remains unchanged and becomes the unique global minimizer of $f$, while the remaining three minimizers persist as \emph{strict local minimizers} with strictly larger objective values. Since the perturbation is smooth and $\alpha$ is small, these local minimizers remain isolated and lie in neighborhoods of the corresponding minimizers of $h$.

\paragraph{Summary.}
The function $f(x,y)$ is smooth and non-convex, admits a \emph{finite number of local minimizers}, and possesses a \emph{single global minimizer} at $(3,2)$. This structure makes it a convenient test problem for studying optimization dynamics in the presence of multiple local minima.

\subsection{Empirical comparision of drift (decay) rate $\lambda$ for the double cartpole experiment}
To illustrate the effects of $\lambda$ in~\Cref{remark:choice_lambda_cbo}, we used a simpler setting of the double cartpole experiment and plot the cost curve below in~\Cref{fig:sweep_lambda}.
\begin{figure}[h!]
\centering
\includegraphics[width=0.7\linewidth]{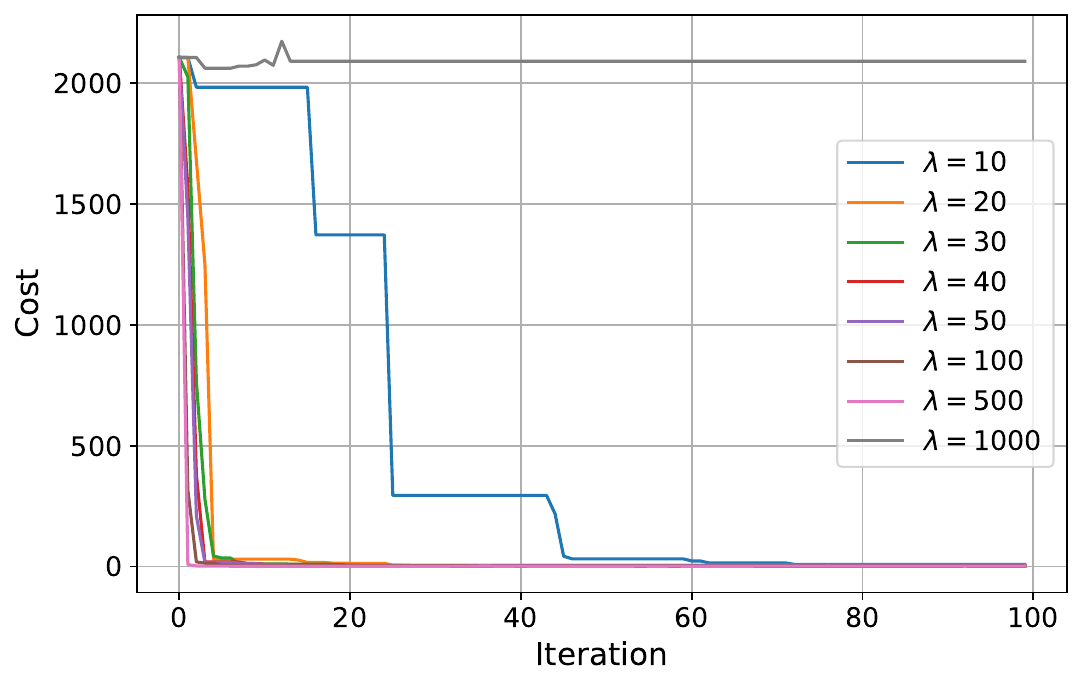}
\vspace{-3mm}
\caption{Too large $\lambda$ hurts performance}\label{fig:sweep_lambda}
\end{figure}\vspace{-5mm}
From the plot, we illustrate that too large $\lambda$ can hurt the performance of the algorithm.

\end{document}